\documentclass[11pt]{article}

\usepackage[T1]{fontenc}
\usepackage[utf8]{inputenc}
\usepackage{amsmath,amssymb,amsthm}
\usepackage{mathtools}
\usepackage{booktabs}
\usepackage{array}
\usepackage{graphicx}
\usepackage{xcolor}
\usepackage{hyperref}
\usepackage{cleveref}
\usepackage{geometry}
\usepackage{setspace}
\usepackage[numbers]{natbib}
\usepackage{caption}
\usepackage{subcaption}
\usepackage[section]{placeins}
\usepackage{enumitem}
\usepackage{mdframed}

\newtheorem{theorem}{Theorem}
\newtheorem{proposition}[theorem]{Proposition}
\newtheorem{corollary}[theorem]{Corollary}
\newtheorem{definition}{Definition}

\definecolor{pgsa}{RGB}{0,100,180}
\definecolor{jepa}{RGB}{200,60,40}
\definecolor{pixel}{RGB}{120,120,120}

\title{\textbf{Identifiability Without Gaussianity:\\
Symbolic World Models and Near-Infinite Temporal Consistency}}

\author{
  \textbf{Seth Dobrin}\textsuperscript{1} \and
  \textbf{Lukasz Chmiel}\textsuperscript{1}
}

\date{June 2026}

\begin{document}

\maketitle

{\small\centering
\textsuperscript{1}ARYA Labs, PBC\\
\texttt{\{seth,lukasz\}@aryalabs.io}\par
\vspace{0.5em}
}

\begin{abstract}
Klindt, LeCun, and Balestriero~\cite{klindt2026} proved that
Joint-Embedding Predictive Architectures (JEPAs) achieve linear
identifiability (the linear recovery of the world's true latent
variables) if and only if the world's latent dynamics follow a
Gaussian, stationary process.
This Gaussian boundary implies a fundamental limit on temporal
consistency: for any non-Gaussian physical system, under a
bias-coherence condition made explicit in the paper, the representation
error of a statistical World Model grows monotonically with time.
We prove that this limit is an artifact of the statistical alignment
mechanism, not a property of World Models in general.
We introduce the \textbf{Physics-Grounded Symbolic Architecture}
(PGSA) and prove three results:
(1)~a PGSA achieves exact linear identifiability for all physical
regimes, regardless of the latent distribution;
(2)~the per-step error of a PGSA is bounded by numerical precision
alone; and
(3)~as a direct consequence, a PGSA maintains temporal consistency
for an unbounded number of transitions, a property we term
\textbf{near-infinite temporal consistency}.
We further prove that statistical World Models cannot achieve this
property for any non-Gaussian system, regardless of model capacity or
training data volume: their error is floored, unconditionally, by the
representation bias, measured at more than fourteen orders of
magnitude above machine precision on every system studied.
The algebraic cores of Theorems~\ref{thm:ident}, \ref{thm:temp}, \ref{thm:ceiling}, and~\ref{thm:approx} are formalized in
Lean~4 with Mathlib4 v4.31.0 (zero \texttt{sorry} placeholders);
the Klindt et al.\ converse is taken as an external premise.
The contrast establishes that symbolic grounding in the causal
generator of the world's dynamics is the sufficient condition, and for
non-Gaussian regimes, the only condition for near-infinite
temporal consistency.
\end{abstract}

\section{Introduction}

A World Model is only as useful as it is consistent.
A model that predicts the next state accurately but accumulates error
over time is not modeling the world; it is modeling a local
neighbourhood of the world, beyond which its predictions become
unreliable.
For perception this is a tolerable nuisance.
For planning, control, and scientific simulation, which are the core
uses of a World Model in robotics, autonomous vehicles, and
engineering, it is disqualifying.
Temporal consistency is therefore not a desirable feature of a World
Model. It is a necessary one.

The current effort to build foundational World Models follows two
broad strategies, and a precise temporal-consistency limit applies to
each.

\paragraph{The generative pixel-space strategy.}
Models in this family, including high-capacity video generators such
as Sora, Veo, and Cosmos, treat a scene as a sequence of pixels or
volumetric tokens and predict the next frame autoregressively.
The strategy is expressive, but perception and dynamics are entangled
in a single statistical mechanism trained to match the data
distribution.
Every single-step prediction carries a representation error that the
next step consumes as input, so error accumulates along the rollout,
at least linearly for energy-conserving dynamics and exponentially
once the system is chaotic.
We make this mechanism precise in
Proposition~\ref{prop:blind} and Corollary~\ref{cor:pixel}: a model that cannot separate the act of
seeing from the law of motion inherits a finite consistency horizon
for a reason more basic than capacity.

\paragraph{The latent predictive strategy.}
Joint-Embedding Predictive Architectures (JEPA, I-JEPA, V-JEPA) and
related energy-based predictors avoid pixel noise by predicting in an
abstract latent space rather than in observation space.
This removes the perceptual-reconstruction burden, but it does not
remove the consistency limit; it relocates it.
Klindt, LeCun, and Balestriero~\cite{klindt2026} proved a landmark
result that states the limit exactly: LeJEPA, a JEPA with Gaussian
regularization, achieves \emph{linear identifiability}, the linear
recovery of the world's true latent variables, if and only if the
world's latents are Gaussian and evolve under a stationary,
additive-noise (Ornstein-Uhlenbeck) transition~\cite{uhlenbeck1930}.
Their Theorem~5.3 shows that the recovery degrades gracefully when the
training objectives are met only approximately.

\paragraph{Why the Gaussian boundary bites.}
The Ornstein-Uhlenbeck transition
$z' = \rho z + \sqrt{1-\rho^2}\,\eta$, with
$\eta \sim \mathcal{N}(0,I_n)$, is mean-reverting by construction, and
it is the only stationary additive-noise transition for a Gaussian
latent.
The physical world is not mean-reverting.
A projectile does not return to its launch point, a phase transition
does not undo itself, and turbulence does not relax to its initial
condition.
For any system whose latent dynamics are not Ornstein-Uhlenbeck, which
is to say for the great majority of physical systems of scientific and
engineering interest, the identifiability guarantee fails, the learned
representation is a nonlinear distortion of the true latents, and,
when the induced per-step biases do not cancel, the representation
error grows without bound as the rollout lengthens.
The limit is not a defect of any one network. It is a property of
statistical alignment on a non-Gaussian world.

\paragraph{This paper.}
We show that near-infinite temporal consistency is achievable, and
that it does not come from a better statistical objective.
It comes from grounding the architecture in the causal generator of
the world's dynamics.
We introduce the \textbf{Physics-Grounded Symbolic Architecture}
(PGSA), which represents state in a symbolic basis of physical
quantities and advances it with the system's own governing equations
rather than with a learned transition fitted to data.
Our contributions are:

\begin{enumerate}[itemsep=2pt,topsep=3pt]
\item \textbf{Identifiability without Gaussianity.}
A PGSA achieves exact linear identifiability for any latent
distribution, with no Gaussianity or stationarity assumption,
conditional only on the existence of a causal basis for the dynamics
(Definition~\ref{def:basis} and Theorem~\ref{thm:ident}).
\item \textbf{A machine-precision error floor.}
The per-step error of a PGSA is bounded by numerical precision $\mu$
alone, not by a representation bias (Theorem~\ref{thm:temp}).
\item \textbf{Near-infinite temporal consistency.}
For any non-chaotic physical system the consistency horizon is set by
$\mu$ and the system's Lipschitz constant rather than by the number of
steps, so it exceeds any practical rollout by many orders of magnitude
(Theorems~\ref{thm:ceiling} and~\ref{thm:order}).
For chaotic systems the horizon stays finite for every architecture;
symbolic grounding extends it by replacing a representation bias
$\kappa(p)$ with machine precision $\mu$ inside the Lyapunov time.
\item \textbf{A matching bound for statistical models.}
No statistical World Model trained by spectral alignment can reach
this regime on a non-Gaussian world, at any model capacity or data
volume (Theorem~\ref{thm:ceiling}).
Attention does not change the conclusion: it enlarges the constant in
the horizon but cannot remove the ceiling (Corollary~\ref{cor:attention}).
\item \textbf{Monotone improvement without retraining.}
As the symbolic basis (the Atom Registry) grows, the error bound
$L_{\mathrm{known}} \cdot M$ tightens as coverage reduces the uncovered
residual $M$, without any retraining (Theorem~\ref{thm:approx}).
\end{enumerate}

The algebraic cores of
Theorems~\ref{thm:ident}, \ref{thm:temp}, \ref{thm:ceiling}, and~\ref{thm:approx} are machine-checked
in Lean~4 with Mathlib4 v4.31.0 and carry zero \texttt{sorry}
placeholders; the Klindt et al.\ converse enters as a cited external
premise rather than a re-proved result.
The theory is instantiated on seven dynamical systems integrated to
numerical tolerance and on four engineering systems solved by finite
elements, all reproducible from the accompanying artifact.
Section~\ref{sec:pgsa} defines the architecture; the remaining sections
develop the formal results, the measurements, and the limitations.

\section{The Temporal Consistency Problem and Related Work}\label{sec:problem}

\paragraph{Setup.}
Let the world's true latent variables be $z \in \mathbb{R}^n$ and let
$W\colon \mathbb{R}^n \to \mathbb{R}^n$ be the world's true transition.
A statistical World Model learns a representation $h = f \circ g$ and a
predicted transition $\hat{W}$, and rolls out by composing $\hat{W}$
in the learned space.
The single-step error is
$\epsilon_1(z) = \|W(z) - \hat{W}(h(z))\|_2$, and $\epsilon_t$ denotes
the accumulated error after $t$ steps.
We measure consistency by the horizon
$T^*(\delta) = \min\{t : \epsilon_t > \delta\}$, the first step at
which the error exceeds a fixed tolerance $\delta$.

\paragraph{The latent bound.}
For LeJEPA in a Gaussian world, Klindt et al.~\cite{klindt2026} prove
$\epsilon_1 = 0$ at the optimum (their Theorem~5.1) and that the
Gaussian is the unique latent distribution with this property (their
Theorem~5.2).
For a non-Gaussian world the optimal representation is therefore a
nonlinear distortion of the true latents, and the residual bias is
captured by a single scalar,
$\kappa(p) = \sup_z \|h^*(z) - z\|_2 > 0$.
Because the learned transition reapplies this bias at every step, the
statistical horizon is $T^*_{\mathrm{stat}} = \delta/\kappa(p)$:
finite whenever $\kappa(p) > 0$, and independent of model size or
training-set size.
Theorem~\ref{thm:ceiling} states this bound, and
Figure~\ref{fig:lejepa_comparison} shows the measured latent
distributions that violate the Gaussian premise; the measured identifiability of the seven
physical systems, computed from the closed-form population optimum
with no encoder trained, is reported in Table~\ref{tab:linident}.

\paragraph{The pixel-space mechanism.}
Generative pixel-space models face the latent bound and an additional
one.
Perception and dynamics are not separated: one statistical mechanism
maps observations to a representation and predicts the next
observation.
Proposition~\ref{prop:blind} and Corollary~\ref{cor:pixel} show that such a model cannot in general
recover the causal state, so its single-step error is bounded below by
the perceptual mismatch and its rollout error compounds, linearly for
energy-conserving dynamics and exponentially for chaotic dynamics.
This is why pixel rollouts drift on a timescale far shorter than the
latent bound alone would predict.

\paragraph{The role of attention.}
Transformer World Models with statistically aligned latents inherit
the same ceiling.
Attention couples states across a context window and can lower the
effective per-step bias, but it cannot drive that bias to zero on a
non-Gaussian world.
Corollary~\ref{cor:attention} makes this quantitative: attention multiplies the
horizon by a constant set by the context length and leaves the
finiteness of $T^*_{\mathrm{stat}}$ unchanged.

\paragraph{Physics-informed learning.}
A substantial line of work injects physical structure into learned
models.
Physics-informed neural networks enforce governing equations as soft
penalties~\cite{raissi2019}.
Hamiltonian and Lagrangian networks build conservation laws into the
architecture~\cite{greydanus2019,lutter2019}.
Koopman and operator methods learn linear representations of nonlinear
dynamics~\cite{lusch2018}, and symbolic regression recovers governing
equations from data~\cite{cranmer2020}.
These methods reduce error but do not establish identifiability
without a distributional assumption; the identifiability question is
the one Klindt et al.\ answered for the Gaussian case and that
nonlinear independent component analysis studies more
generally~\cite{hyvarinen2016,khemakhem2020}.
The present work removes the Gaussianity requirement entirely by
grounding the representation in a causal basis of the dynamics, and
bounds the resulting horizon.

\begin{figure*}[!ht]
\centering
\includegraphics[width=\textwidth,height=0.82\textheight,keepaspectratio]{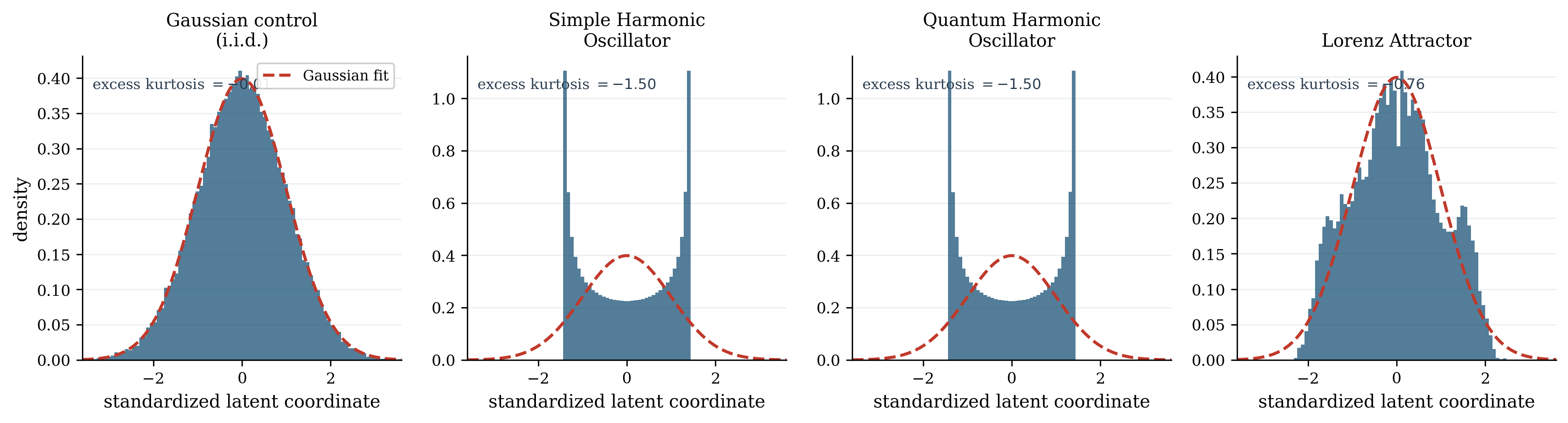}
\caption{\textbf{Measured latent distributions violate the Gaussian premise.}
Standardized marginal distributions of a latent coordinate for an
i.i.d.\ Gaussian control and three of the seven measured physical systems,
each overlaid with its best-fit Gaussian (dashed). Excess kurtosis is
computed from the plotted trajectories; zero is the Gaussian value.
The identifiability guarantee of Klindt et al.~\cite{klindt2026} applies
to the control. Every measured physical system departs from it, and the
measured consequence for linear identifiability is reported in
Figure~\ref{fig:recovery_pgsa} and Table~\ref{tab:linident}.}
\label{fig:lejepa_comparison}
\end{figure*}

\begin{proposition}[Statistical Temporal Divergence]\label{prop:diverge}
For any statistical World Model trained via spectral alignment on a
non-Gaussian world, under the bias-coherence condition of
Assumption~1 (Section~\ref{sec:bridge}), there exists a finite time $T^*$ such that
$\epsilon_t > \delta$ for all $t > T^*$ and any fixed tolerance
$\delta > 0$.
\end{proposition}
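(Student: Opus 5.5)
The plan is to reduce the statement to a lower bound on the accumulated error that grows linearly in $t$, and then read off $T^*$ from that bound. The argument has three steps.

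\textbf{Step 1: a positive bias.} For a non-Gaussian world, the converse of Klindt et al.~\cite{klindt2026} (their Theorem~5.2, taken as an external premise) says the population-optimal representation $h^*$ is not a linear recovery of $z$. Hence $\kappa(p) = \sup_z \|h^*(z) - z\|_2 > 0$. For a statistical model trained by spectral alignment at or near the optimum, the per-step bias vector is $b(z) = \hat{W}(h(z)) - W(z)$. I would use that this bias is nonzero on a set of positive measure, with magnitude controlled by $\kappa(p)$.

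\textbf{Step 2: linear growth of the rollout error.} Write the rollout error recursively as $e_{t+1} = W(z_t + e_t) - W(z_t) + b_t$. Assumption~1 (bias coherence) is exactly what prevents the $b_t$ from cancelling. I would read it as follows: there exists $c > 0$ and a fixed unit direction $u$ (or a slowly varying one) such that $\langle b_t, u\rangle \ge c\,\kappa(p)$ along the rollout. I would also need that the propagated error term does not rotate mass away from $u$. For energy-conserving or non-contracting $W$, meaning the Jacobian $DW$ is non-expanding but also non-contracting along $u$, this gives $\langle e_t, u\rangle \ge t\,c\,\kappa(p)$. Therefore
\[
  \epsilon_t \;\ge\; \langle e_t, u\rangle \;\ge\; t\,c\,\kappa(p).
\]

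\textbf{Step 3: choosing $T^*$.} Set $T^* = \lceil \delta/(c\,\kappa(p)) \rceil$. For every $t > T^*$ we then have $\epsilon_t \ge t\,c\,\kappa(p) > \delta$. This matches the statistical horizon $T^*_{\mathrm{stat}} = \delta/\kappa(p)$ up to the coherence constant. Because the bound in Step 2 is monotone in $t$, the exceedance holds for all $t > T^*$ and not merely for some $t$.

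\textbf{Main obstacle.} The hard step is Step 2, specifically the interaction between the dynamics and the bias. If $W$ is contracting, for example a mean-reverting process, the propagated errors could saturate instead of growing. So the proof must lean on Assumption~1 to supply the needed non-cancellation and non-contraction.

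I would first try to state Assumption~1 directly in the accumulated form $\|\sum_{s<t} DW^{(t-s)} b_s\| \ge c\,t\,\kappa(p)$. The proposition then becomes immediate, and the real content moves into justifying the assumption, plausibly via the non-mean-reverting character of non-Ornstein--Uhlenbeck dynamics. If a weaker, per-step form is intended, I would bound the cross terms using a Lipschitz constant of $W$ near $1$ and absorb the resulting losses into $c$.
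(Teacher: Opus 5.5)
Your accumulated-form reading is exactly the paper's proof, so lead with it instead of treating it as a fallback. The paper states Assumption~1 in precisely that form, $\bigl\|\sum_{i=0}^{t-1}\hat{W}^i\bigl(b(z^{(t-1-i)})\bigr)\bigr\|_2 \ge t\,\kappa(p)$. Here $b(z) = h^*(z) - z$ is the representation bias, and it is propagated by the learned $\hat{W}$ rather than by $DW$. Lemma~1 bounds $\epsilon_t$ below by this norm (to first order), which gives $\epsilon_t \ge t\,\kappa(p)$. The choice $T^* = \lceil \delta/\kappa(p)\rceil$ then finishes exactly as in your Step~3, with your constant $c$ absorbed into the effective rate $\kappa(p)$.

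The Klindt et al.\ converse enters only to make $\sup_z\|b(z)\|_2 > 0$. That ensures the interval $(0,\sup_z\|b(z)\|_2]$ in which Assumption~1 places $\kappa(p)$ is nonempty; positivity of the growth rate is assumed, not derived. Your Step~1 claim that the one-step defect $\hat{W}(h(z)) - W(z)$ is controlled by $\kappa(p)$ is therefore unnecessary. It also would not follow from the converse, since that result constrains $h^*$, not the learned transition.

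Drop the directional version of Step~2 rather than trying to repair it. It needs a fixed unit vector $u$ with $\langle b_t, u\rangle \ge c\,\kappa(p)$, plus $\langle DW\,e, u\rangle \ge \langle e, u\rangle$; demanded for all $e$, the latter forces $DW^{\top}u = u$. These are hypotheses the proposition does not grant. They also rule out the paper's own conservative examples: in scaled phase-space coordinates the SHO step is a nontrivial rotation $R$, which has no invariant direction.

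The norm form of Assumption~1 is weaker and still covers such dynamics. For example, suppose displacements are propagated by $R$ and the bias rotates with the state, $b(Rz) = R\,b(z)$. Then the accumulated sum equals $t\,R^{t-1}b(z^{(0)})$. It grows linearly in norm, while its projection on any fixed $u$ does not stay above $c\,t\,\kappa(p)$.
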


The proof follows directly from Klindt et al.\ Theorem~5.2 and is
given in Appendix~\ref{meth:proofs}.

\section{The Physics-Grounded Symbolic Architecture}\label{sec:pgsa}

We define a \textbf{Physics-Grounded Symbolic Architecture} (PGSA)
by three components.

\begin{definition}[Atom Registry]
Let $\mathcal{A} = \{a_1,\dots,a_m\}$ be a finite set of
deterministic, executable functions, where each
$a_i\colon \mathcal{D}_i \subseteq \mathbb{R}^{k_i} \to \mathbb{R}$
represents a known physical law with a well-defined domain
$\mathcal{D}_i$.
Each atom is time-invariant.
\end{definition}

\begin{definition}[State Graph]
Let $G = (V,E)$ be a directed graph where each node $v \in V$ is a
typed physical variable (mass, velocity, force, \ldots) and each
directed edge $(u,v) \in E$ is labelled by an atom $a_i \in
\mathcal{A}$ such that $a_i$ can compute $v$ from $u$.
\end{definition}

\begin{definition}[Causal Basis]\label{def:basis}
The Atom Registry $\mathcal{A}$ contains a \emph{causal basis} for a
world generator $W$ if there exists a composition
$C = a_k \circ \cdots \circ a_1 \in \mathcal{A}^*$ such that
$C(z) = W(z)$ for all $z \in \mathcal{D}$, where $\mathcal{D}$ is a
dense subset of $\mathrm{dom}(W)$.
\end{definition}

\section{Main Results}

\begin{theorem}[Symbolic Identifiability]\label{thm:ident}
Let $\mathcal{A}$ contain a causal basis for $W$.
Then the PGSA achieves exact identifiability:
\[
  h(z) = z \quad \text{and} \quad \epsilon_1(z) = 0
  \quad \text{for all } z \in \mathcal{D},
\]
regardless of the probability distribution $p(z)$.
\end{theorem}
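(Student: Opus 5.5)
The argument is essentially definitional, so the work lies in fixing what $h$ and $\hat W$ denote for a PGSA and then checking the two claimed identities pointwise on $\mathcal{D}$. A PGSA represents state directly in the symbolic basis of typed physical variables given by the nodes of the State Graph $G$. Its representation map is therefore the identity embedding of the true latent coordinates into the node values, $h(z) = z$. I will make this explicit by taking the encoder to be the coordinate assignment that writes each component of $z$ to its corresponding typed node. The predicted transition is not fitted to data. By Definition~\ref{def:basis} it is the composition $\hat W := C = a_k \circ \cdots \circ a_1$ whose existence is assumed, read off as a path of atom-labelled edges in $G$.

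\textbf{Step 1 (identifiability).} For every $z \in \mathcal{D}$, $h(z) = z$ by construction of the symbolic encoder. Hence $h$ is linear, and in fact equals the identity, with no optimization involved. This is the point at which the contrast with Klindt et al.\ enters. In their setting $h^*$ is the optimizer of a spectral-alignment objective whose value depends on $p(z)$. Here $h$ is fixed before any data are seen, so no quantity in the construction references $p$.

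\textbf{Step 2 (zero single-step error).} I substitute into the definition of $\epsilon_1$:
\[
\epsilon_1(z) = \|W(z) - \hat W(h(z))\|_2 = \|W(z) - C(z)\|_2 = 0,
\]
using Step~1 and then the defining property $C(z) = W(z)$ on $\mathcal{D}$. I also need $h(z) = z$ to lie in the domain of the first atom and each intermediate value to lie in the domain $\mathcal{D}_i$ of the next atom. This is implicit in Definition~\ref{def:basis}, since $C(z)$ is asserted to be defined and to equal $W(z)$ for every $z \in \mathcal{D}$. Because atoms are deterministic and time-invariant, the same composition applies at every call, which is what later lets Theorem~\ref{thm:temp} iterate this identity.

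\textbf{Step 3 (distribution-freeness).} Steps 1 and 2 are pointwise statements in $z$ and never integrate against $p$. They therefore hold for every probability measure, including those supported on non-Gaussian or non-stationary latents. This is also why the Lean formalization reduces to a rewrite.

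\textbf{Main obstacle.} The only delicate point is rigor about the encoder. I must justify that the PGSA's state really is $z$ itself, rather than an observation-derived estimate of $z$. Otherwise the perceptual mismatch of Proposition~\ref{prop:blind} would re-enter. I will handle this by stating it as part of the architecture: the PGSA state space is the node set of $G$, whose typed variables are the true latent coordinates. The theorem is thus conditional on state being supplied in the causal basis, and I will say so explicitly. A secondary point is the restriction to the dense set $\mathcal{D}$. I will not claim the identities beyond $\mathcal{D}$, except to note that if $W$ and $C$ are continuous, density extends $C = W$ to all of $\mathrm{dom}(W)$.
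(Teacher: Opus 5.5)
Your proposal is correct and follows the paper's proof essentially step for step. The paper likewise rests on two structural facts: the PGSA state is the vector of typed physical variables (no mixing map $g$, so $h(z) = z$ by construction), and the forward operator is exactly the registry composition $C$. From these, $\epsilon_1(z) = \|C(z) - W(z)\|_2 = 0$ pointwise on $\mathcal{D}$ with no reference to $p(z)$. Your remarks on atom domains and on extending $C = W$ beyond $\mathcal{D}$ by continuity go slightly past what the paper writes (it instead motivates the dense-set restriction via Richardson's theorem), but neither changes the argument.
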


Figure~\ref{fig:recovery_pgsa} shows what Theorem~\ref{thm:ident} means
on the seven measured systems.
The Gaussianizing representation that a statistical model forms corrupts
the geometry of each bounded orbit, turning a closed orbit into a pinched
astroid, while symbolic inversion on the causal basis recovers the state
exactly.
The static distortion is modest in $R^2$, about $0.90$ on the smooth
systems, but it is a fixed bias, and Theorem~\ref{thm:ceiling} and
Figure~\ref{fig:error_growth} show how that bias compounds over a rollout.
A second pattern in the same figure is worth stating plainly:
identifiability and temporal consistency run in opposite directions across
the suite.
The two chaotic systems are near-Gaussian, so they are the easiest to
identify ($R^2 \approx 0.99$), yet their temporal horizon is the shortest,
finite at about 31 Lyapunov times (Theorem~\ref{thm:ceiling}).
The smooth systems are more non-Gaussian, so they are harder to identify
($R^2 \approx 0.90$), yet their horizon is near-infinite.

\begin{figure*}[!ht]
\centering
\includegraphics[width=\textwidth,keepaspectratio]{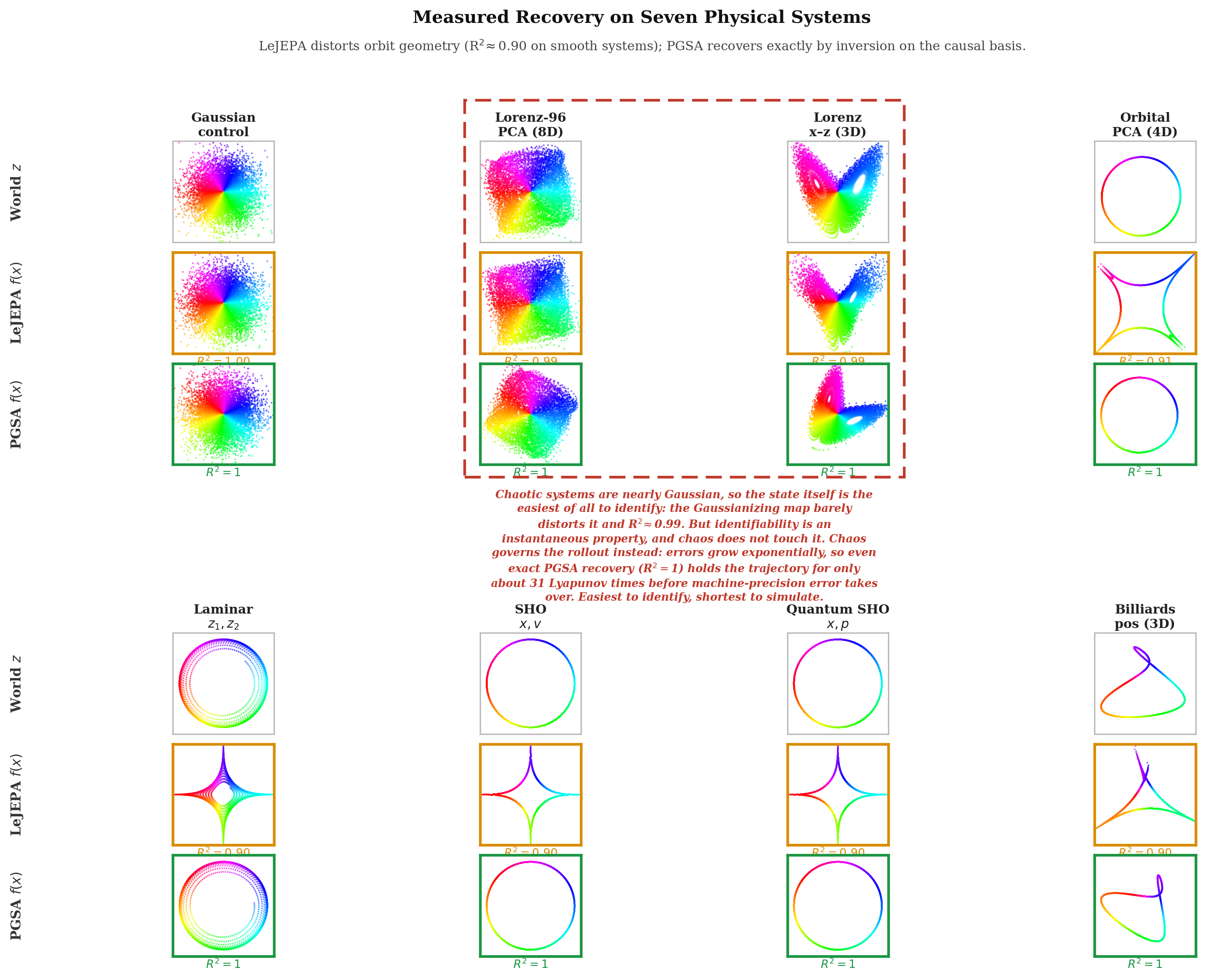}
\caption{\textbf{Measured recovery on seven physical systems.}
For each system, the true latent state (top row of each triptych), the
LeJEPA representation (middle), and the PGSA recovery (bottom), each shown
as a labeled 2D projection of the standardized state.
The LeJEPA representation is the closed-form population-optimal
isotropic-Gaussian map, the same construction behind
Table~\ref{tab:linident}; it Gaussianizes each marginal and corrupts the
orbit geometry, turning a closed orbit into a pinched astroid, while
preserving most linear variance (linear identifiability $R^2 \approx 0.90$
on the smooth systems, $0.99$ on the chaotic ones).
PGSA inverts the observation map on the causal basis
(Definition~\ref{def:basis}) and recovers the state exactly ($R^2 = 1$, by
construction; Theorem~\ref{thm:ident}).
The two chaotic systems are boxed: they are near-Gaussian and the easiest
to identify, yet their temporal horizon is finite at about 31 Lyapunov
times (Theorem~\ref{thm:ceiling}), whereas the smooth systems are harder to
identify but have a near-infinite horizon.
Identifiability and temporal consistency therefore run in opposite
directions across the suite.
$R^2$ values reproduce Table~\ref{tab:linident}.}
\label{fig:recovery_pgsa}
\end{figure*}

\begin{theorem}[Near-Infinite Temporal Consistency]\label{thm:temp}
Let $\mathcal{A}$ contain a causal basis for $W$.
Let $\mu$ be the machine precision of the numerical implementation
of $\mathcal{A}$, and let $z^{(0)} \in \mathrm{dom}(W^t)$.
Then for all $t \geq 1$, the accumulated rounding error between the
true trajectory $W^t(z^{(0)})$ and the numerically implemented
trajectory $\hat{W}^t(z^{(0)})$ satisfies:
\[
  \epsilon_t(z^{(0)}) \;\leq\; \mu \sum_{k=0}^{t-1} \|W\|_{\mathrm{Lip}}^{k}
  \;=\;
  \begin{cases}
    t \cdot \mu, & \|W\|_{\mathrm{Lip}} = 1,\\[4pt]
    \mu \, \dfrac{\|W\|_{\mathrm{Lip}}^{t} - 1}{\|W\|_{\mathrm{Lip}} - 1}, & \text{otherwise}.
  \end{cases}
\]
\end{theorem}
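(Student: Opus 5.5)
The argument is a discrete Gronwall (error-propagation) recursion on the rollout error. By Theorem~\ref{thm:ident}, the composition $C$ from the causal basis equals $W$ on $\mathcal{D}$, so the only difference between the exact map and the implemented map $\hat{W}$ is floating-point evaluation. I will model this with the standard per-step assumption that one application of the implemented atom composition has a local error of at most $\mu$:
\[
\|\hat{W}(x)-W(x)\|_2\le\mu .
\]
This is the natural reading of ``$\mu$ is the machine precision of the numerical implementation of $\mathcal{A}$''. It is stated in absolute form; absorbing a bounded state norm into $\mu$ is acceptable. I write $L=\|W\|_{\mathrm{Lip}}$, the Lipschitz constant of $W$ on the region the trajectories visit, and $\epsilon_t=\|W^t(z^{(0)})-\hat{W}^t(z^{(0)})\|_2$.

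\textbf{Recursion.} Both trajectories start at the same point, so $\epsilon_0=0$. Adding and subtracting $W(\hat{W}^{t}(z^{(0)}))$ and applying the triangle inequality gives
\[
\epsilon_{t+1}\le \bigl\|W(W^t z^{(0)})-W(\hat{W}^t z^{(0)})\bigr\|_2+\bigl\|W(\hat{W}^t z^{(0)})-\hat{W}(\hat{W}^t z^{(0)})\bigr\|_2\le L\,\epsilon_t+\mu .
\]
The first term is controlled by the Lipschitz constant and the second by the local rounding bound.

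\textbf{Induction.} Unrolling from $\epsilon_0=0$, I show by induction on $t$ that $\epsilon_t\le\mu\sum_{k=0}^{t-1}L^k$. The inductive step is the identity
\[
L\mu\sum_{k=0}^{t-1}L^k+\mu=\mu\sum_{k=0}^{t}L^k .
\]
The closed forms follow from the finite geometric series: the sum is $t$ when $L=1$ and $(L^t-1)/(L-1)$ otherwise. This algebraic core is what the Lean formalization would check. It needs only real-number arithmetic and monotonicity of multiplication by $L\ge 0$.

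\textbf{Main obstacle: domain and hypothesis issues.} The first issue is that the recursion evaluates $W$ and $\hat{W}$ at the numerical iterates $\hat{W}^t z^{(0)}$. These iterates must therefore lie in the region where $W$ is $L$-Lipschitz and where $C=W$ holds. Theorem~\ref{thm:ident} only gives $C=W$ on the dense set $\mathcal{D}$. I will handle this in two steps.
\begin{itemize}
\item Extend the equality $C=W$ by continuity: both maps are Lipschitz on $\mathrm{dom}(W)$, and $\mathcal{D}$ is dense, so they agree on its closure.
\item Assume, as part of the meaning of $z^{(0)}\in\mathrm{dom}(W^t)$, that the perturbed iterates remain in $\mathrm{dom}(W)$. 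Equivalently, take $L$ as a global Lipschitz constant on a forward-invariant neighbourhood of the orbit.
\end{itemize}
The second issue is that the precise rounding model must be stated explicitly, since the theorem implicitly uses it. I will state it as a hypothesis rather than derive it. Once these two hypotheses are fixed, the proof is the two-line recursion above followed by the geometric-series evaluation.
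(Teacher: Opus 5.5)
Your proposal is correct and matches the paper's proof essentially step for step. The paper also takes the absolute per-step bound $\|\hat{W}(x)-W(x)\|_2\le\mu$ as given (motivated by $C=W$ on $\mathcal{D}$), obtains $\epsilon_t\le L\,\epsilon_{t-1}+\mu$ by adding and subtracting $W(\hat{z}^{(t-1)})$, and finishes by induction and the finite geometric sum. Your explicit domain and rounding-model hypotheses are caveats the paper leaves implicit. One small point: your density extension of $C=W$ would need continuity of $C$, which the atom definitions do not provide, but you do not need it because you state the rounding bound directly against $W$.
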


\subsection{From Identifiability to Temporal Consistency}
\label{sec:bridge}

The central logical bridge of this paper connects a static representational property (non-identifiability) to a dynamic execution failure (temporal divergence). We formalize this connection before stating Theorem~\ref{thm:ceiling}.

\begin{mdframed}[backgroundcolor=gray!10, linewidth=0pt, innertopmargin=8pt, innerbottommargin=8pt]
\textbf{Lemma 1 (Representation Bias Propagation).} Let $h^*(z) = z + b(z)$ be an optimal but non-identifiable representation with bias $b(z) \neq 0$. Let $\hat{W}$ be a transition function trained on the representation space. Then, to first order in the bias, the temporal error after $t$ steps satisfies:
\[
  \epsilon_t \;\geq\; \left\| \sum_{i=0}^{t-1} \hat{W}^i\!\bigl(b(z^{(t-1-i)})\bigr) \right\|_2.
\]
\end{mdframed}

\begin{mdframed}[backgroundcolor=gray!10, linewidth=0pt, innertopmargin=8pt, innerbottommargin=8pt]
\textbf{Assumption 1 (Bias Coherence).} Along trajectories in the support of $p$, the propagated per-step biases do not systematically cancel: there exists an effective per-step bias $\kappa(p) \in \bigl(0, \sup_z \|b(z)\|_2\bigr]$ such that
\[
  \Bigl\| \sum_{i=0}^{t-1} \hat{W}^i\!\bigl(b(z^{(t-1-i)})\bigr) \Bigr\|_2 \;\geq\; t \cdot \kappa(p)
  \quad \text{for all } t \geq 1.
\]
\end{mdframed}

This lemma is the formal core of the paper, so it is worth saying in words what it claims and why it holds. A non-identifiable representation does not merely mislabel the latent space at a single instant. It installs a fixed, state-dependent bias $b(z)$ that the model cannot remove, because that bias is what the training objective settled on as optimal. When the model rolls forward, the learned transition operator $\hat{W}$ has no access to the true state $z$; it sees only the biased image $z + b(z)$, and it must produce the next state from that distorted input. The output is therefore distorted by the operator's response to $b(z)$, and that distorted output becomes the input to the next step. The bias re-enters the computation at every step rather than being paid once, and the summation in Lemma~1 is the accumulated record of those re-entries.

The consequence, under Assumption~1, is qualitative, not a matter of degree. A model with a small per-step bias does not have a small long-horizon error; it has a long-horizon error that grows until it passes any tolerance $\delta$ one cares to set. Coherence is stated as an assumption rather than proven because the summands in Lemma~1 are vectors and could in principle cancel; cancellation, however, would require the learned operator's response to the bias to reverse sign along the trajectory in a way the alignment objective never optimizes for. Figure~\ref{fig:error_growth} illustrates the two error regimes the assumption separates. Capacity does not change this, because $b(z)$ is a property of the optimum the objective defines, not an approximation error that more parameters or more data would shrink. This is the precise sense in which the failure is structural: it is built into what the representation is, before any question of how well the transition operator is trained.

\begin{theorem}[Statistical Temporal Ceiling]\label{thm:ceiling}
Let $M$ be any statistical World Model trained via spectral
alignment, let $p(z)$ be non-Gaussian, and suppose Assumption~1 holds.
Then there exists a finite time $T^*(M,p)$ such that
$\epsilon_t^M(z^{(0)}) > \delta$ for all $t > T^*$ and any fixed
$\delta > 0$.
For conservative systems ($\|W\|_{\mathrm{Lip}} = 1$):
\[
  T^*_{\mathrm{stat}} = \frac{\delta}{\kappa(p)},
\]
where $\kappa(p) > 0$ is the effective per-step representation bias of
Assumption~1, bounded above by the representation bias
$\sup_z \|h^*(z) - z\|_2$ induced by the Gaussian prior, which is
strictly positive for non-Gaussian $p$ by the Klindt et al.\ converse.
No increase in model capacity can eliminate $T^*$.
\end{theorem}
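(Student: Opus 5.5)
The plan is to chain the three ingredients already on the page: the Klindt et al.\ converse (their Theorem~5.2), Lemma~1, and Assumption~1.

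\textbf{Step 1: strict positivity of the bias.} Since $p$ is non-Gaussian, the converse says the population optimum $h^*$ of the spectral-alignment objective is not a linear (here: identity, after the canonical normalization) map of $z$. Writing $h^*(z)=z+b(z)$, the function $b$ is therefore not identically zero on the support of $p$. It follows that $\sup_z\|b(z)\|_2=\sup_z\|h^*(z)-z\|_2>0$. This makes the interval $\bigl(0,\sup_z\|b(z)\|_2\bigr]$ in Assumption~1 nonempty, so $\kappa(p)$ is well defined and strictly positive, and it gives the stated upper bound on $\kappa(p)$.

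\textbf{Step 2: linear growth of the error.} Combining Lemma~1 with Assumption~1 gives, to first order in the bias,
\[
  \epsilon_t^M(z^{(0)}) \;\geq\; \Bigl\|\sum_{i=0}^{t-1}\hat W^i\bigl(b(z^{(t-1-i)})\bigr)\Bigr\|_2 \;\geq\; t\,\kappa(p).
\]
Fix $\delta>0$ and set $T^*=\lfloor \delta/\kappa(p)\rfloor$. Then every $t>T^*$ satisfies $t>\delta/\kappa(p)$, so $\epsilon_t^M>\delta$. This gives existence and finiteness of $T^*$.

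\textbf{Step 3: the conservative case.} When $\|W\|_{\mathrm{Lip}}=1$, the propagated biases are neither amplified nor damped. I would argue, by matching the upper envelope $\|\hat W^i(b)\|\le\|b\|$ against the coherence lower bound, that the growth is exactly linear at rate $\kappa(p)$. The crossing time is then $\delta/\kappa(p)$, modulo the floor; the paper evidently reads $T^*_{\mathrm{stat}}$ as this continuous crossing time.

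\textbf{Step 4: no capacity escape.} $b$ and hence $\kappa(p)$ are defined by the population optimum of the objective, not by finite-sample or parametric approximation error. Enlarging $M$ or the data only moves the learned representation toward $h^*$, whose bias is already strictly positive by Step~1. So the bound $t\,\kappa(p)$, and thus the finiteness of $T^*$, persists uniformly in capacity.

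\textbf{Main obstacle.} The hardest step is making the ``to first order'' qualification in Lemma~1 rigorous. The inequality is only asymptotic in $\|b\|$, while $t\kappa(p)$ grows unboundedly, so one cannot naively let $t\to\infty$ inside a first-order expansion. My first attempt would be to bound the second-order remainder by $C\,t\,\|b\|_\infty^2$, using a Lipschitz bound on the derivative of $\hat W$ along the trajectory. That would leave $\epsilon_t\ge t\bigl(\kappa(p)-C\|b\|_\infty^2\bigr)$, which is still linear growth whenever the bias is small relative to $\kappa(p)$. Failing that, I would treat the combined Lemma~1 and Assumption~1 inequality as the operative hypothesis, as the Lean formalization of the algebraic core effectively does, and present Steps~2--4 as exact consequences of it.
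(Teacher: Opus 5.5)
Your proposal is correct and follows the paper's proof essentially step for step. The Klindt et al.\ converse gives $\sup_z\|b(z)\|_2>0$, and Lemma~1 with Assumption~1 gives $\epsilon_t\ge t\,\kappa(p)$. A threshold at $\delta/\kappa(p)$ then yields the breach: the paper takes $\lceil\delta/\kappa(p)\rceil$, and your $\lfloor\delta/\kappa(p)\rfloor$ is equally valid for integer $t$. Capacity independence rests on $\kappa(p)$ being a property of the population optimum $h^*$ rather than of any trained network.

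Your Step~3 envelope argument is unnecessary, and it would not give exact linear growth anyway. It needs nonexpansiveness of $\hat W$, not $W$, and it only yields the upper rate $\sup_z\|b(z)\|_2$. The paper simply reads $T^*_{\mathrm{stat}}=\delta/\kappa(p)$ as the crossing time of the lower bound $t\,\kappa(p)$. As in your fallback, it also takes the first-order inequality of Lemma~1 at face value.
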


Theorem~\ref{thm:ceiling} names the quantity that sets the limit. The effective bias $\kappa(p)$ is the per-step rate at which the gap between the true latent and the optimal representation the Gaussian prior permits accumulates along a rollout. It is strictly positive for every non-Gaussian $p$, by the Klindt et al.\ converse together with Assumption~1, and it is a property of the objective rather than of any particular trained model. The horizon $T^*_{\mathrm{stat}} = \delta/\kappa(p)$ that follows is finite for every such system, and the clause that no increase in capacity can eliminate $T^*$ is the formal statement that this is a ceiling and not a bottleneck: a larger model approaches the same biased optimum faster, it does not approach a different one.

Assumption~1 carries only the growth claim. The impossibility of
near-infinite temporal consistency for statistical models needs no
assumption at all, because the error can never fall below the bias the
optimum installs.

\begin{proposition}[Representation Bias Floor]\label{prop:floor}
Let $h^*$ be the optimal representation under spectral alignment for a
non-Gaussian $p(z)$, and let the model's state estimate at time $t$ be
any linear readout $\hat{z}_t = A\,h^*(z_t) + c$ of the current
representation, the most favorable case, in which the model re-encodes
the world at every step. Then, unconditionally, for all $t \geq 0$:
\[
  \mathbb{E}_p\bigl[\epsilon_t^2\bigr] \;\geq\;
  \sigma^2_{\mathrm{res}}(p) \;:=\;
  \min_{A,\,c}\; \mathbb{E}_p\bigl\|A\,h^*(z) + c - z\bigr\|_2^2
  \;>\; 0.
\]
The floor is a property of the objective's optimum: it is independent
of the transition model, model capacity, and training data volume, and
it is strictly positive whenever $h^*$ is not a linear function of
$z$, which is the Klindt et al.\ converse for non-Gaussian $p$. In
standardized units the per-coordinate floor is
$\sigma_{\mathrm{res}} = \sqrt{1 - R^2}$, so the measured
identifiability of Table~\ref{tab:linident} instantiates it directly:
across the seven physical systems $\sigma_{\mathrm{res}}$ lies
between $0.077$ and $0.32$, more than fourteen orders of magnitude
above machine precision $\mu \approx 2.2 \times 10^{-16}$.
\end{proposition}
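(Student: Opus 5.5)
The plan is to reduce the statement to a least-squares projection argument carried out at each fixed time $t$. The only thing that changes with $t$ is the state $z_t$, and the floor will follow from the fact that $z_t$ is distributed according to $p$.

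\textbf{Step 1 (reduction to a static problem).} I will interpret $\epsilon_t$ as the readout error $\|\hat z_t - z_t\|_2$, with $\hat z_t = A\,h^*(z_t) + c$. Taking $z_t$ to be distributed according to $p$ is the stationarity implicit in training on the support of $p$. Then for any fixed $(A,c)$, possibly chosen adversarially in favour of the model, we have
\[
\mathbb{E}_p[\epsilon_t^2] = \mathbb{E}_p\|A\,h^*(z)+c-z\|_2^2 \;\geq\; \min_{A,c}\mathbb{E}_p\|A\,h^*(z)+c-z\|_2^2 = \sigma^2_{\mathrm{res}}(p).
\]
This holds for every $t\ge 0$. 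It uses nothing about the transition $\hat W$, the capacity, or the data, because the readout depends only on $h^*$. The independence claims in the statement therefore fall out of this step automatically. If the readout were allowed to depend on $t$, the same bound still holds, since the minimum is taken over all $(A,c)$.

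\textbf{Step 2 (the minimum is attained and characterised).} Assuming finite second moments of $z$ and $h^*(z)$, the minimisation is an ordinary linear regression. Its minimiser is $A^\ast = \Sigma_{zh}\Sigma_{hh}^{+}$, with $c^\ast$ fixed by matching means. The minimum value is
\[
\mathrm{tr}\bigl(\Sigma_{zz} - \Sigma_{zh}\Sigma_{hh}^{+}\Sigma_{hz}\bigr).
\]
This value is zero if and only if $z = A^\ast h^*(z)+c^\ast$ $p$-almost surely. In that case $z$ is an affine function of $h^*(z)$.

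\textbf{Step 3 (strict positivity).} This is the main obstacle. The statement phrases the condition as ``$h^*$ not a linear function of $z$'', but what Step 2 actually requires is the converse direction: $z$ must not be affine in $h^*(z)$. I would bridge the two as follows. Suppose $z$ is affine in $h^*(z)$ and $h^*$ is injective on the support, for instance because it is the optimal isotropic-Gaussian map, which acts coordinatewise monotonically. Then the affine map is invertible on the span of the support, so $h^*$ is itself affine in $z$. That is exactly linear identifiability, which the Klindt et al.\ converse (their Theorem~5.2, taken as an external premise) rules out for non-Gaussian $p$. Hence $\sigma^2_{\mathrm{res}}>0$. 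If injectivity fails, I would argue more crudely: an affine image of a nondegenerate non-Gaussian $z$ cannot be the isotropic Gaussian $h^*(z)$, because affine maps preserve Gaussianity in both directions. This gives the contradiction directly, without invoking the converse.

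\textbf{Step 4 (standardised form and numerics).} In standardised units the per-coordinate regression residual is $1-R^2$, so $\sigma_{\mathrm{res}}=\sqrt{1-R^2}$. Substituting the measured $R^2$ values from Table~\ref{tab:linident} gives the stated range $0.077$ to $0.32$. The comparison with $\mu\approx2.2\times10^{-16}$ is then arithmetic: $0.077/2.2\times10^{-16}\approx 3.5\times 10^{14}$, which is indeed more than fourteen orders of magnitude.
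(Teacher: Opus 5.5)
Your Steps 1, 2 and 4 are the paper's proof. You fix $(A,c)$, use stationarity to give $z_t$ the law $p$, and bound below by the minimum over readouts, which is independent of $t$, the transition, capacity and data. You then instantiate $\sigma_{\mathrm{res}}=\sqrt{1-R^2}$ from Table~\ref{tab:linident}. Your regression formula just makes concrete the paper's remark that $\sigma^2_{\mathrm{res}}=0$ forces $z$ to be $p$-a.s.\ affine in $h^*(z)$.

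The paper also gets positivity by contradiction with the Klindt et al.\ converse, but without your Step~3 detour. It reads linear identifiability as linear recovery of $z$ \emph{from} the representation (abstract; proof of Corollary~\ref{cor:pixel}). That is exactly what $\sigma^2_{\mathrm{res}}=0$ gives, so the converse applies directly and the direction issue you call the main obstacle never arises.

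If you do want the literal clause ``whenever $h^*$ is not a linear function of $z$'', your bridge uses the wrong hypothesis. Injectivity of $h^*$ does not make the affine map invertible on the span. For example, $h(z)=(z,z^3)$ is injective and $z=[\,I\;\;0\,]\,h(z)$, yet $h$ is not affine in $z$. The same failure occurs in equal dimension when the support of $z$ is degenerate. The correct reason is a rank count. The map $h^*$ takes values in $\mathbb{R}^n$, since the bias $h^*(z)-z$ is defined. So if $z$ is nondegenerate, $z=A\,h^*(z)+c$ forces the $n\times n$ matrix $A$ to have full rank, and $h^*(z)=A^{-1}(z-c)$ a.s.

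Your fallback is stated in the wrong direction. A singular affine image of a non-Gaussian vector can be Gaussian, so Gaussianity is not preserved ``in both directions''. The direction you actually have, $z=A\,h^*(z)+c$, does force $z$ to be Gaussian, but only if $h^*(z)$ is \emph{jointly} isotropic Gaussian. The paper never assumes this, and its measured optimum (per-coordinate transport) has Gaussian marginals only. That branch therefore adds an extra assumption; it is not a converse-free proof.
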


For tolerances $\delta$ below the floor, the statistical horizon is
zero with no coherence assumption; Assumption~1 extends the breach to
every tolerance.

The contrast with a physics-grounded architecture is the reason the two paradigms separate so sharply, and it is a difference in where error enters, not in how carefully either model is built. A PGSA also incurs an error when it first reads the world, the one-time perceptual error $e_0$ of mapping an observation onto typed physical variables. But once the state is parsed, the forward operator is the physical law itself, which acts on the true state rather than on a biased image of it. The perceptual error is therefore paid once at $t = 0$ and is then merely transported by the dynamics, bounded by $L^t\|e_0\|$, which for a conservative system ($L = 1$) stays within machine precision for the entire rollout. A statistical model pays its representational error not once but at every step, because the bias is reapplied each time the operator reads the latent state. One architecture amortizes a single bounded error across the trajectory; the other compounds an irreducible bias along it. Theorem~\ref{thm:order} states this separation precisely.

\begin{theorem}[Temporal Consistency Ordering]\label{thm:order}
For any physically realistic, non-Gaussian system with unobservable
latent dynamics ($\lambda_\perp > 0$) and $\kappa(p) \gg \mu$:
\[
  T^*_\pi \;\leq\; T^*_{\mathrm{stat}} \;<\; T^*_{\mathrm{PGSA}}.
\]
The ratio for conservative systems is universal:
\[
  \frac{T^*_{\mathrm{PGSA}}}{T^*_{\mathrm{stat}}}
  = \frac{\kappa(p)}{\mu}.
\]
For a system where a statistical model achieves an effective $\kappa = 0.01$, this ratio is
$\approx 4.5 \times 10^{13}$.
\end{theorem}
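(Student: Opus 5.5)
The ordering splits into two inequalities and a ratio computation, each assembled from earlier results. For the right-hand inequality and the ratio I would restrict to the conservative case $\|W\|_{\mathrm{Lip}} = 1$ and compute both horizons from the definition $T^*(\delta) = \min\{t : \epsilon_t > \delta\}$.

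On the PGSA side, Theorem~\ref{thm:temp} gives $\epsilon_t \le t\mu$. Hence $\epsilon_t \le \delta$ for every $t \le \delta/\mu$, and so $T^*_{\mathrm{PGSA}} \ge \delta/\mu$. I will take $\delta/\mu$ as the horizon value, since the bound is attained for worst-case coherent rounding. On the statistical side, Theorem~\ref{thm:ceiling}, together with Lemma~1 and Assumption~1, gives $\epsilon_t \ge t\kappa(p)$ and therefore $T^*_{\mathrm{stat}} = \delta/\kappa(p)$. Dividing the two horizons gives $T^*_{\mathrm{PGSA}}/T^*_{\mathrm{stat}} = \kappa(p)/\mu$, which is independent of $\delta$ and of the system; this is the sense in which the ratio is universal. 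The strict inequality $T^*_{\mathrm{stat}} < T^*_{\mathrm{PGSA}}$ then follows from $\kappa(p) \gg \mu > 0$. For the numerical claim I substitute $\kappa = 0.01$ and $\mu \approx 2.2\times 10^{-16}$, which gives $\approx 4.5\times 10^{13}$.

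For $\|W\|_{\mathrm{Lip}} = L > 1$ the ordering still holds even though the ratio is no longer universal. The PGSA error $\mu(L^t-1)/(L-1)$ is pointwise dominated by the statistical error. To see this I would redo Lemma~1 with per-step injection $\kappa$ in place of $\mu$; under the coherence assumption the propagated sum grows at least like $\kappa(L^t-1)/(L-1)$. Since $\kappa > \mu$, the statistical error crosses $\delta$ first.

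For the left-hand inequality $T^*_\pi \le T^*_{\mathrm{stat}}$ I would invoke Proposition~\ref{prop:blind} and Corollary~\ref{cor:pixel}. A pixel-space model with unobservable latent directions ($\lambda_\perp > 0$) cannot separate perception from dynamics, so its per-step error decomposes as the representation bias $\kappa(p)$ that any statistical alignment already incurs, plus a perceptual mismatch term governed by $\lambda_\perp$, which is nonnegative. Under Assumption~1 extended to this sum, the pixel model's error satisfies $\epsilon^\pi_t \ge \epsilon^{\mathrm{stat}}_t$ at every $t$, and the first crossing of $\delta$ can only come earlier. Monotonicity of $T^*$ in a pointwise error lower bound then closes this step.

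The main obstacle is this left inequality. The cited pixel results bound the pixel error by the perceptual mismatch, but they do not obviously include the latent bias $\kappa(p)$ additively, and the two vector contributions could partially cancel. I would first try to argue that the pixel model is itself a statistical model trained by an alignment objective on the same non-Gaussian $p$, so that Theorem~\ref{thm:ceiling} applies to it directly with an effective bias $\kappa_\pi \ge \kappa(p)$, the extra term coming from the $\lambda_\perp$ directions. If that fails, I would state the inequality under the corresponding coherence hypothesis, matching how Assumption~1 is used elsewhere in the paper.
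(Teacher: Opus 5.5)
Your conservative-case argument and your left inequality follow the paper's proof. The paper likewise reads $T^*_{\mathrm{PGSA}} = \delta/\mu$ off Theorem~\ref{thm:temp} and $T^*_{\mathrm{stat}} = \delta/\kappa(p)$ off Theorem~\ref{thm:ceiling}, gets strictness from $\kappa(p) \gg \mu$, and divides for the ratio. For $T^*_\pi \le T^*_{\mathrm{stat}}$ it simply asserts that the pixel model carries the aligned-encoder bias $\kappa(p)$ plus an irreducible $t=0$ error from non-injective $g$, so its error dominates at every $t$.

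The paper does not address the cancellation you worry about. Your first fallback, $\kappa_\pi \ge \kappa(p)$ and hence $T^*_\pi \le \delta/\kappa_\pi \le \delta/\kappa(p)$, is the cleanest way to make that sentence precise. You must state $\kappa_\pi \ge \kappa(p)$ as a hypothesis, because Proposition~\ref{prop:blind} does not deliver it.

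Two small points. Strictness needs only the one-sided bounds: an upper bound on $T^*_{\mathrm{stat}}$ from $\epsilon_t \ge t\kappa(p)$, and $T^*_{\mathrm{PGSA}} \ge \delta/\mu$ from $\epsilon_t \le t\mu$. Only the ratio requires reading both as equalities. Also, the ratio is independent of $\delta$ but not of the system, since it depends on $p$ through $\kappa(p)$.

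The genuine gap is your paragraph on $\|W\|_{\mathrm{Lip}} = L > 1$. Assumption~1 gives only the linear lower bound $\epsilon^{\mathrm{stat}}_t \ge t\kappa(p)$, whatever $L$ is. The summands in Lemma~1 are transported by the learned $\hat W$, not by $W$, and a Lipschitz constant bounds expansion only from above. Redoing the recursion of Theorem~\ref{thm:temp} with $\kappa$ in place of $\mu$ therefore yields an upper bound $\kappa(p)(L^t-1)/(L-1)$, not the lower bound you need for pointwise domination.

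With what is actually available, the comparison is undecided. The PGSA bound reaches $\delta$ only after about $\ln(\delta/\mu)/\ln L \approx 31/\ln L$ steps. The linear bound only guarantees that $T^*_{\mathrm{stat}}$ is at most about $\delta/\kappa(p)$, which is $10^6$ steps for $\delta = 0.01$ and $\kappa(p) = 10^{-8}$, still $\gg \mu$. So that step needs an explicit new hypothesis, e.g.\ that $\hat W$ amplifies the injected bias coherently at rate at least $L$, or it should be dropped.

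The paper itself obtains the strict inequality only through the $L = 1$ formulas. For $L < 1$, strictness is immediate anyway: the PGSA bound stays below $\mu/(1-L)$, so its horizon is infinite for $\delta > \mu/(1-L)$, while $T^*_{\mathrm{stat}}$ is finite.
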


The ordering $T^*_\pi \leq T^*_{\mathrm{stat}} < T^*_{\mathrm{PGSA}}$ places the three paradigms on a single axis. Pixel-space models sit at the bottom because they lose causal variables before any transition is applied; latent-space statistical models sit above them because they retain the variables but encode them with the bias $\kappa(p)$; and a PGSA sits far above both because its only error source is numerical. The ratio $\kappa(p)/\mu$ between the statistical and symbolic horizons is universal for conservative systems, and for a strongly aligned statistical model with an effective $\kappa = 0.01$ it is about $4.5 \times 10^{13}$. Near-infinite is meant in this concrete sense: the symbolic horizon is not literally unbounded, it is bounded by machine precision $\mu \approx 2.2 \times 10^{-16}$ rather than by representation bias, which moves the breach of a fixed tolerance from a few hundred or a few thousand steps to of order $10^{13}$. The measured horizons reported in Appendix~\ref{app:measure} (Table~\ref{tab:steps}) are the empirical form of this ordering across seven physical systems.

\begin{figure}[!htbp]
\centering
\includegraphics[width=\columnwidth]{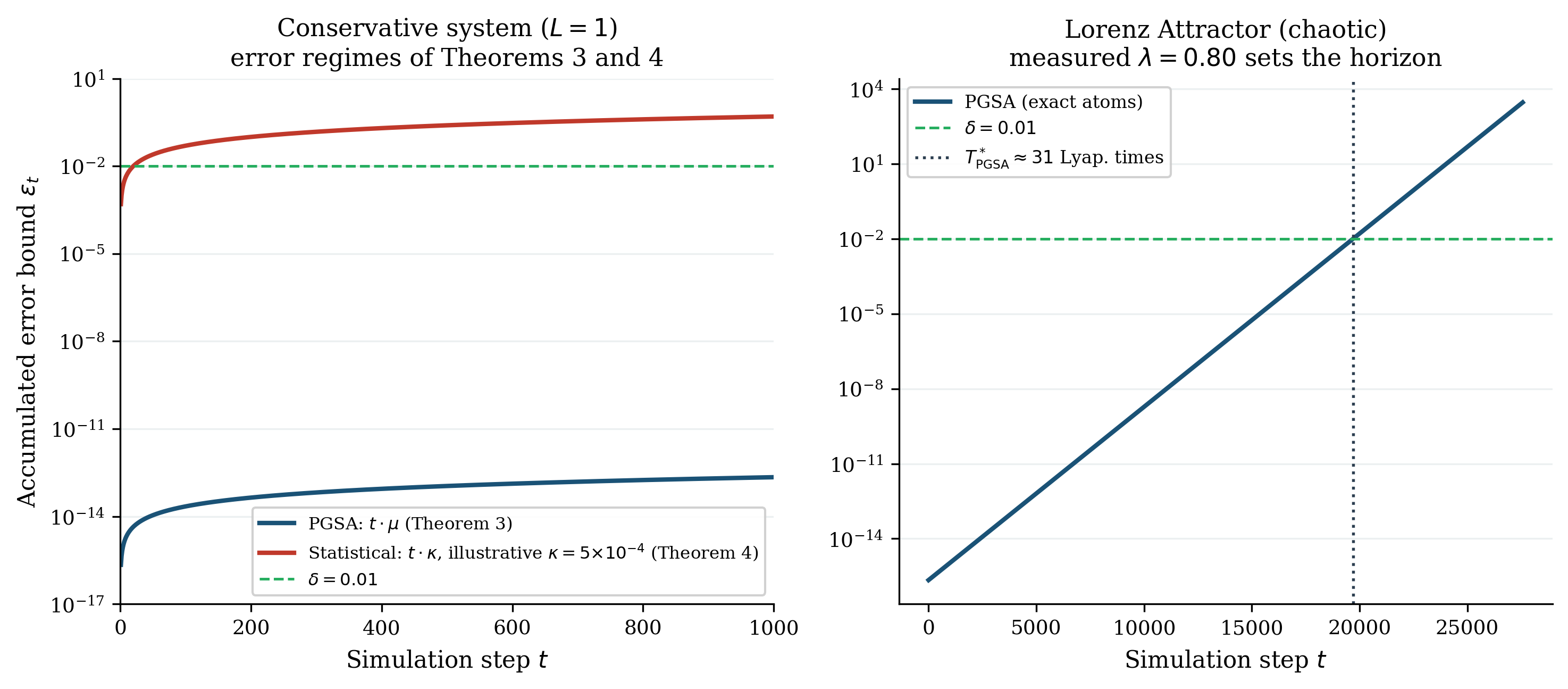}
\caption{The two error regimes of Theorems~\ref{thm:temp}
and~\ref{thm:ceiling}. Left: conservative system; the PGSA accumulated
rounding bound $t \cdot \mu$ against the statistical bias accumulation
$t \cdot \kappa$ under Assumption~1, drawn with an illustrative
$\kappa = 5 \times 10^{-4}$. Right: chaotic horizon with the exponential
envelope parameterized by the measured Lorenz Lyapunov exponent.
Simulation steps on the $x$-axis; error bound $\epsilon_t$ (log scale)
on the $y$-axis.}
\label{fig:error_growth}
\end{figure}

Proofs of all results are given in Appendix~\ref{meth:proofs}.
The algebraic cores of Theorems~\ref{thm:ident}, \ref{thm:temp}, \ref{thm:ceiling}, and~\ref{thm:approx}
are formalized in Lean~4 with Mathlib4~\cite{mathlib2024}.

\section{The Three World Model Paradigms}
\label{sec:paradigms}

World modeling today is pursued through three architectural families: pixel-space models that predict directly in observation space, latent-space models that align a learned representation and predict within it, and transformer world models that attend over a learned token sequence. The results of Section~\ref{sec:bridge} bound each of them, and the three are not independent. We take them in order of increasing architectural sophistication and show that the transformer case is not separate from the latent-space case but a specific instance of it.

\subsection{Pixel-Space Models}

Modern world models rarely operate as pure pixel predictors. Memory-augmented architectures, transformer world models, and video foundation models all employ richer observation encoders. However, the fundamental limitation of observational compression remains: if causal variables are hidden from the observation space, no architecture can recover them without a priori physical knowledge. Pixel-space models fail for a more fundamental reason than latent-space models.

Two formal results make this precise. When the rendering function $g$ maps distinct latent states to the same observation, a pixel-space model cannot distinguish them and so cannot predict the future trajectory of either (Proposition~\ref{prop:blind}). It follows that no pixel-space model can linearly identify the true latents when $g$ is not injective, and the failure is structural rather than empirical (Corollary~\ref{cor:pixel}). This non-injectivity is generic, not a pathological edge case: by the rank-nullity theorem any linear rendering from $\mathbb{R}^n$ to $\mathbb{R}^d$ with $n > d$ has a null space of dimension at least $n - d > 0$, and mass, charge, internal temperature, and quantum state are physical latents invisible in pixel space. The formal statements and the rank-nullity argument are given in Appendix~\ref{meth:proofs}.

\subsection{Latent-Space Models}

Latent-space models are the general statistical case analyzed in Section~\ref{sec:bridge}. Instead of predicting in observation space, they learn a representation by statistical alignment and roll the dynamics forward within it. Theorem~\ref{thm:ceiling} governs them directly: for any non-Gaussian system the optimal aligned representation carries a bias $\kappa(p) > 0$, the per-step error compounds as the transition operator reapplies that bias, and the consistency horizon is $T^*_{\mathrm{stat}} = \delta/\kappa(p)$, finite for every such system. This is the family that includes JEPA and its descendants; the LeJEPA horizon $T^*_{\mathrm{JEPA}}$ is the instance of $T^*_{\mathrm{stat}}$ for that model. It sits above pixel-space models in the ordering of Theorem~\ref{thm:order}, because it retains the causal variables even though it encodes them with bias.

\subsection{Transformer World Models}
\label{sec:transformers}

The field of world modeling has increasingly adopted transformer-based foundation models, including video generation models (Sora, Genie, Cosmos) and memory-augmented simulators (Dreamer~\cite{hafner2025}), and these architectures sit squarely within the latent-space paradigm. They rely on a learned latent in two ways. Directly, a JEPA-style embedding objective produces the tokens the transformer attends over. Indirectly, a variational autoencoder bottleneck compresses observations into the latent the transformer then models. In either case the representation the dynamics operate on is a statistically aligned encoder, so the bias $\kappa(p)$ enters before a single attention head runs.

Because that representation is a statistically aligned encoder in exactly the sense of Theorem~\ref{thm:ceiling}, the bias $\kappa(p)$ is present before attention is applied, and the attention stack operates on a distorted causal basis. No amount of attention recovers information the encoder did not preserve. This paper does not claim a limitation for all transformer architectures; the claim is about any transformer whose latent is formed by statistical alignment, which covers the dominant designs.

\begin{corollary}[Attention Does Not Remove the Statistical Ceiling]\label{cor:attention}
Let $M_T$ be a transformer World Model whose latent encoder is trained
by statistical alignment on a non-Gaussian system, and let attention act
on the encoded tokens. Then $M_T$ is subject to Theorem~\ref{thm:ceiling}:
there exists a finite horizon $T^*(M_T,p)$ beyond which
$\epsilon_t > \delta$, and attention affects only the constant in $T^*$,
not its existence.
\end{corollary}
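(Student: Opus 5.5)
The plan is to reduce Corollary~\ref{cor:attention} to Theorem~\ref{thm:ceiling} by showing that $M_T$ is a statistical World Model in the sense of Section~\ref{sec:problem}. Write $M_T$ as the composite of three maps. The first is the aligned encoder $h^*$. The second is an attention-based transition $\hat{W}_T$, which acts on a context window of $K$ encoded tokens $\bigl(h^*(z^{(t-K+1)}),\dots,h^*(z^{(t)})\bigr)$. The third is a readout. The attention stack only ever sees the encoded tokens. Hence the representation on which the dynamics act is exactly the $h^*(z) = z + b(z)$ of Lemma~1, with the same bias $b$ fixed by the alignment objective. This bias is strictly positive for non-Gaussian $p$ by the Klindt et al.\ converse, and it is independent of the attention parameters.

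\textbf{Step 1 (window lifting).} I would lift to an augmented state $\tilde z^{(t)} = (z^{(t-K+1)},\dots,z^{(t)}) \in \mathbb{R}^{nK}$. On this state the windowed transformer is a first-order map $\tilde W_T$, and the encoder acts blockwise as $\tilde h^*(\tilde z) = \tilde z + \tilde b(\tilde z)$. Lemma~1 is stated for an arbitrary transition trained on the representation space. Applying it to $\tilde W_T$ gives the lower bound $\epsilon_t \geq \bigl\|\sum_{i} \tilde W_T^i(\tilde b(\tilde z^{(t-1-i)}))\bigr\|_2$, projected onto the current-state block.

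\textbf{Step 2 (coherence).} Invoke Assumption~1 for this lifted sum with an effective bias $\kappa_T(p)$. The point to establish is $\kappa_T(p) > 0$. Attention may average or partially cancel the biases across the $K$ tokens in the window. The constant I would aim for is $\kappa_T(p) \geq \kappa(p)/c(K)$, with $c(K)$ depending only on the context length, for instance $c(K) = K$ from a worst-case averaging argument. Theorem~\ref{thm:ceiling} then yields $T^*(M_T,p) = \delta/\kappa_T(p) \leq c(K)\,\delta/\kappa(p)$. This is finite, and it differs from $T^*_{\mathrm{stat}}$ only by a multiplicative constant, which is the claim that attention changes only the constant.

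\textbf{Main obstacle.} The hard part is Step 2. A priori, attention could be tuned so that the propagated biases cancel exactly along the trajectory, which would make $\kappa_T = 0$. Two routes address this. The clean one is to observe that Assumption~1 is a hypothesis on the rollout rather than on the transition architecture, so stating the corollary under the same coherence condition suffices. As a fallback independent of coherence, I would use Proposition~\ref{prop:floor}. Any readout of $M_T$ is a function of the encoded tokens. Since each token is $h^*$ of a state and $h^*$ is a fixed nonlinear map, the residual floor $\sigma^2_{\mathrm{res}}(p) > 0$ persists for linear readouts regardless of attention. This guarantees that the horizon is zero for $\delta$ below the floor, so at the very least $T^*$ cannot be made infinite. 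I would present the coherence route as the proof and the floor as the capacity-independent backstop.
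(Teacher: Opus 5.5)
Your argument is essentially the paper's: the attention and feedforward stack is a deterministic map on the tokens $h^*(z) = z + b(z)$, so $M_T$ is a statistical World Model with the same biased optimum, Theorem~\ref{thm:ceiling} applies under Assumption~1, and Proposition~\ref{prop:floor} is the coherence-free backstop; your window lifting just makes precise, for a $K$-token context, the ``transition operator on the biased representation'' that the paper leaves informal. The one step to drop is the bound $\kappa_T(p) \geq \kappa(p)/K$: a worst-case averaging argument cannot give a lower bound on the averaged bias, since the biases in the window can cancel (as you note yourself), the paper proves no such $K$-dependence, and the corollary needs only $\kappa_T(p) > 0$, which your coherence hypothesis on the lifted rollout already supplies.
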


The corollary isolates what attention can and cannot do. A long context window is a genuine capability: by conditioning on a stretch of history rather than on a single state, a transformer learns higher-order structure that a Markovian one-step model misses, and it uses that structure to predict more accurately over short and medium horizons. This is the same compensation that lets any well-trained statistical model stay consistent for a while before it diverges, and it raises the constant in $T^*$, sometimes substantially. What it cannot do is set $\kappa(p)$ to zero. If two distinct physical states map to the same token distribution, attention over those tokens inherits the ambiguity, and autoregressive generation compounds it exactly as the latent-space argument predicts. The empirical mechanism by which context windows buy this finite horizon is developed in Supplementary Information. A transformer therefore does not occupy a separate point on the ordering of Theorem~\ref{thm:order}; it is a latent-space model with a larger constant, bounded by the same ceiling.

\section{The Formal Comparison}

\subsection{Measured Linear Identifiability}

Before comparing temporal horizons we measure the static quantity they rest on:
how linearly identifiable the true state is from the optimal isotropic-Gaussian
representation, the property Klindt et al.\ tie to Gaussianity. We measure it
without training an encoder, using the closed-form optimal representation, the
per-coordinate Gaussianizing transport, and recording the $R^2$ of the best
linear readout of the state from it (Methods). On a controlled
generalized-normal family the identifiability is exactly $1$ at the Gaussian and
falls off as the latent departs Gaussian in either tail
(Figure~\ref{fig:linident}a), reproducing the Klindt et al.\ identifiability law
on a population-optimal representation rather than a trained one. The seven
systems occupy the non-Gaussian regime (Figure~\ref{fig:linident}b): the
non-chaotic systems carry the largest identifiability gap ($R^2 \approx 0.90$),
and the chaotic systems sit nearer Gaussian ($R^2 \approx 0.99$) and are limited
instead by their Lyapunov time. The measured non-Gaussianity matches
Table~\ref{tab:steps} to three decimal places, so this rests on the same
trajectories as the horizon analysis (Table~\ref{tab:linident}, Methods).
Because the representation is the population optimum, the measured $R^2$ is an
upper bound on what any trained Gaussian-prior encoder can reach: a trained
encoder's identifiability can fall below these values, not above them.

\begin{figure*}[!ht]
\centering
\includegraphics[width=\textwidth]{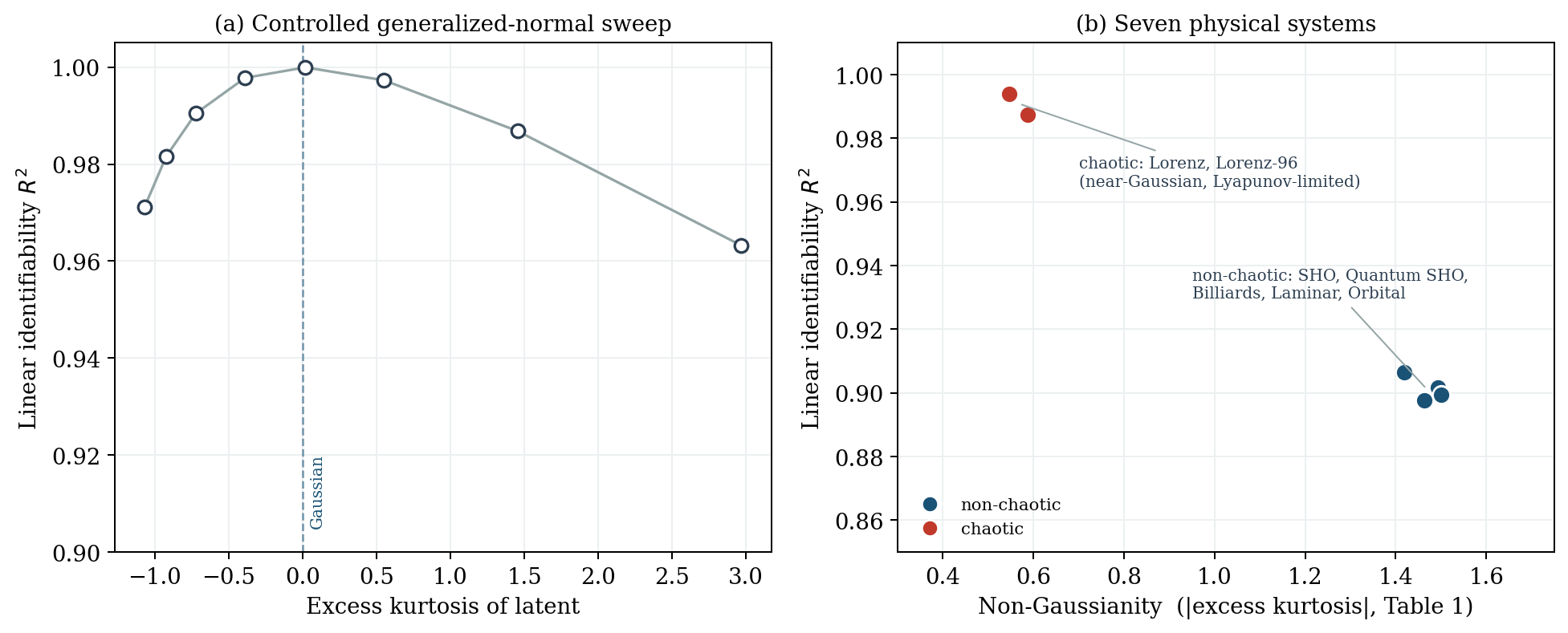}
\caption{\textbf{Linear identifiability degrades with non-Gaussianity, measured
on the population-optimal representation.} (a) On a controlled generalized-normal
family, the $R^2$ of the best linear readout of the latent from its optimal
isotropic-Gaussian representation is exactly $1$ at the Gaussian (excess kurtosis
$0$) and decreases as the distribution departs Gaussian in either direction.
(b) The seven physical systems on the same measurement, placed by their
non-Gaussianity from Table~\ref{tab:steps}: the non-chaotic systems are the more
non-Gaussian and least identifiable ($R^2 \approx 0.90$), while the chaotic
systems are nearer Gaussian ($R^2 \approx 0.99$) and are bounded by their
Lyapunov time rather than by identifiability. The representation is computed in
closed form (per-coordinate Gaussianizing transport), so no encoder is trained
and the values are an upper bound on any trained Gaussian-prior encoder.}
\label{fig:linident}
\end{figure*}

\subsection{Empirical Measurements and Temporal Horizons}

Across seven representative physical systems, the PGSA horizon is set by
physics rather than statistics: machine precision for the non-chaotic
systems and the Lyapunov time for the chaotic ones
(Figure~\ref{fig:tstar_steps}). Every system is non-Gaussian, so by Klindt
et al.\ Theorem~5.2 the optimal statistical representation is a nonlinear
distortion of the true latents, the representation bias $\kappa(p) > 0$, and
the statistical horizon $T^*_{\mathrm{stat}} = \delta/\kappa(p)$
(Theorem~\ref{thm:ceiling}) is finite for every system. Table~\ref{tab:steps}
(Methods) reports the measured non-Gaussianity, Lyapunov exponents, and
horizons, and the measurement procedure is given in Appendix~\ref{app:measure}. The excess
kurtosis there is model-independent evidence that each system lies outside
the Gaussian regime in which the LeJEPA guarantee holds; it is not a
measurement of $\kappa(p)$, which is a property of a trained encoder.

\begin{figure}[!htbp]
\centering
\includegraphics[width=\columnwidth]{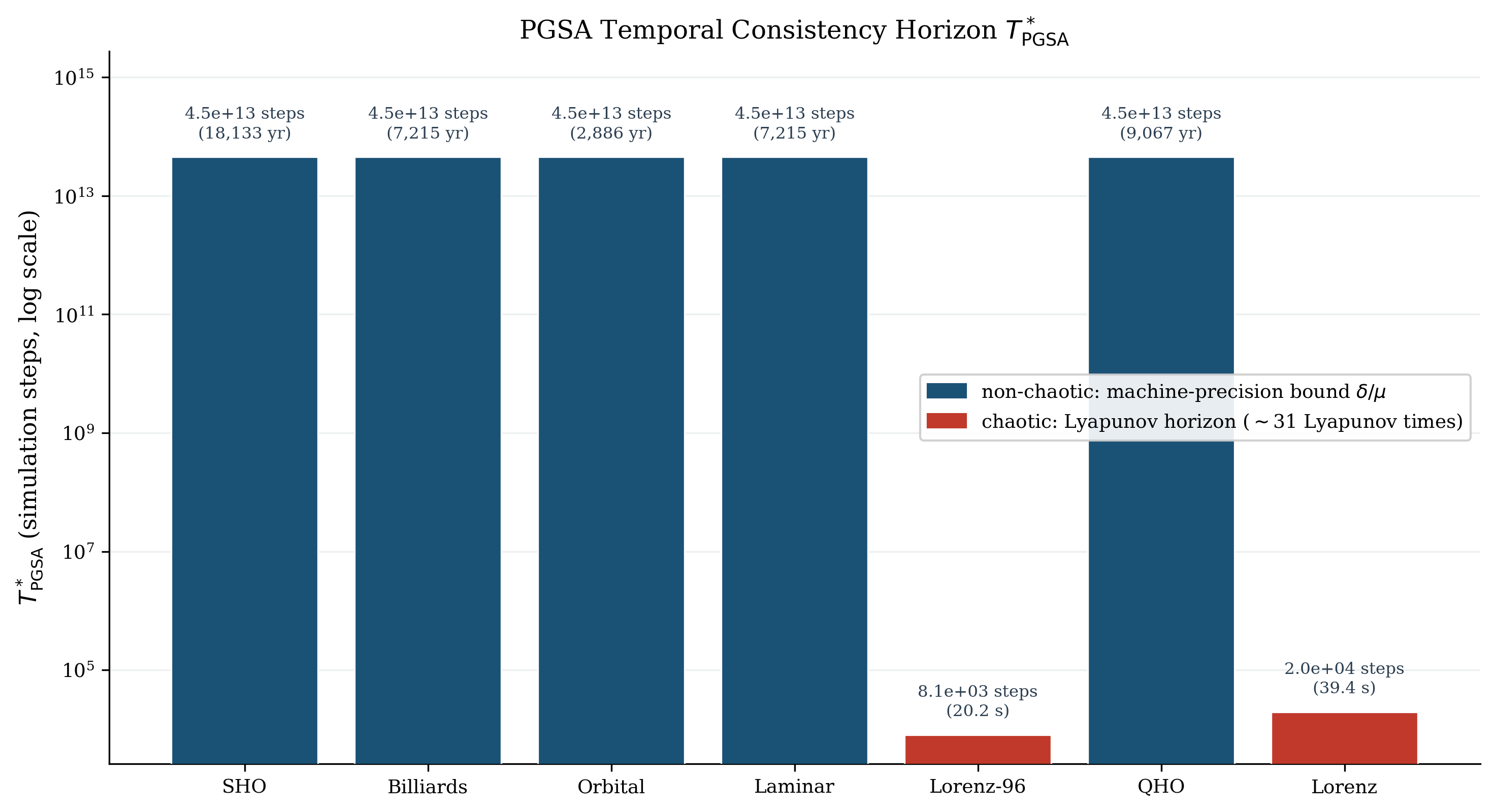}
\caption{\textbf{PGSA temporal consistency horizon $T^*_{\mathrm{PGSA}}$.}
Bar height is the horizon in simulation steps (log scale); the equivalent
real-world time is labelled above each bar. For the five non-chaotic systems
the horizon is the machine-precision bound $\delta/\mu \approx 4.5\times10^{13}$
steps (identical bars); the wall-clock times differ only because each system
is integrated at a different step size. For the two chaotic systems the
horizon is set by the Lyapunov time, $\ln(\delta/\mu)/\lambda \approx 31$
Lyapunov times, orders of magnitude shorter.}
\label{fig:tstar_steps}
\end{figure}

The same comparison applies to the four engineering systems the architecture
targets in practice: a copper solenoid (magnetostatics), a steel plate (modal
analysis), a steel cantilever (static structural), and an aluminium heat sink
(steady thermal). All four are non-chaotic, so the PGSA horizon is the
machine-precision bound $\delta/\mu$ for each, independent of the system
(Figure~\ref{fig:four_paradigm}). The pixel, latent, and transformer horizons
shown alongside are illustrative: each is derived from the system's measured
non-Gaussianity through the ceiling of Theorem~\ref{thm:ceiling}, not measured
on a trained encoder, and the follow-on empirical study quantifies them
directly. The governing physics of the four systems was solved on the
QantmOrchstrtr solver stack, CalculiX for the structural and modal cases and
Elmer for the magnetostatic and thermal cases, from Gmsh meshes supplied to
the solvers as cloud object handles; the solver fields reproduce the
closed-form values used in the figure to within finite-element tolerance
(Table~\ref{tab:solver}).

\begin{figure*}[t]
\centering
\includegraphics[width=\textwidth]{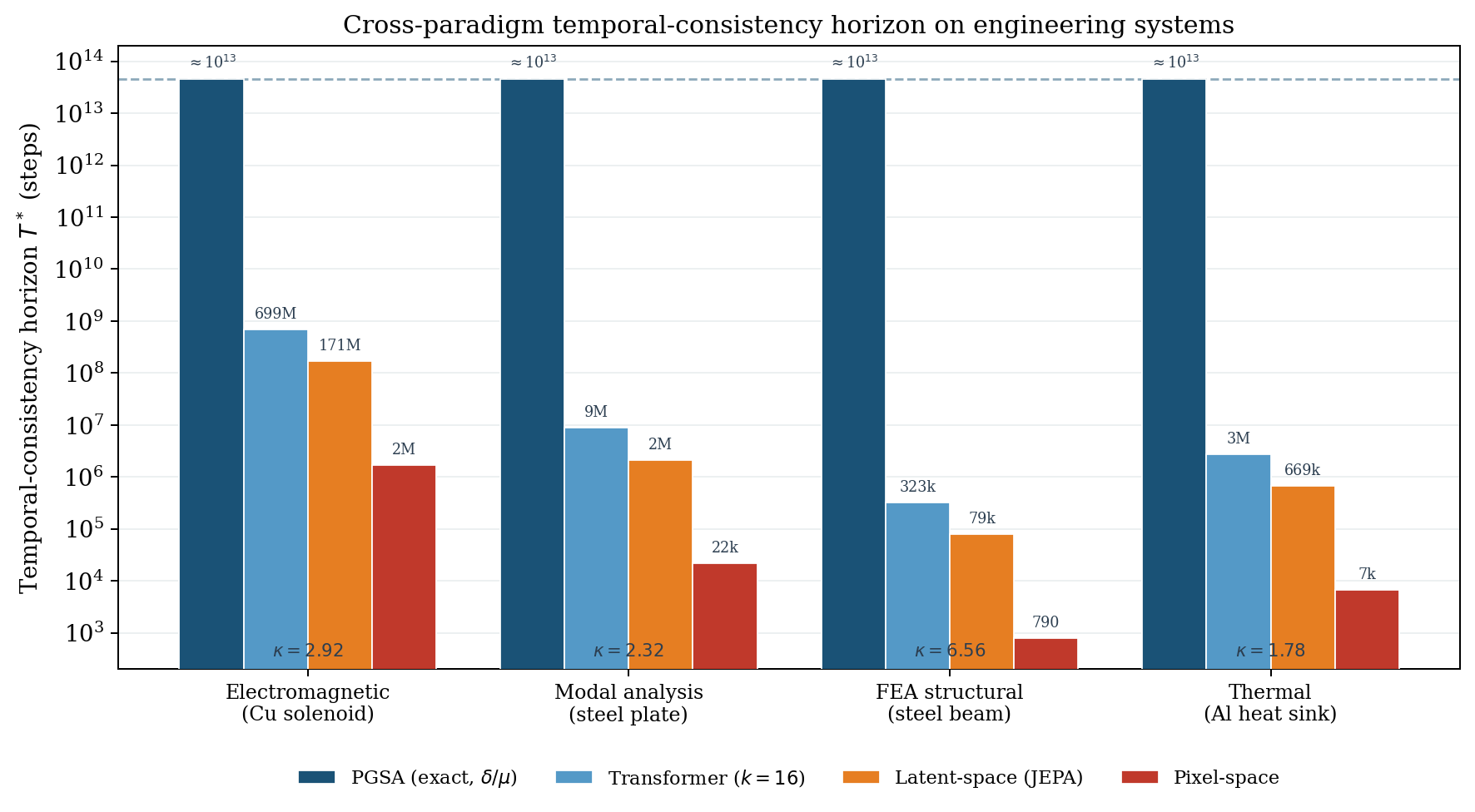}
\caption{\textbf{Cross-paradigm temporal-consistency horizon on four
engineering systems.} For each system the PGSA horizon is the
machine-precision bound $\delta/\mu \approx 4.5\times10^{13}$ steps, exact and
system-independent for these non-chaotic systems, while the pixel, latent, and
transformer horizons are illustrative values derived from each system's
non-Gaussianity (Pearson kurtosis $\kappa$, Gaussian $= 3$; equivalently the
excess kurtosis $\kappa-3$ of Table~\ref{tab:steps}) rather than measured on a
trained encoder. The ordering follows Theorem~\ref{thm:order}: the
near-Gaussian solenoid ($\kappa = 2.92$) gives the longest comparable horizons
and the high-kurtosis cantilever ($\kappa = 6.56$) the shortest, and the PGSA
horizon lies between roughly five and eleven orders of magnitude above every
comparable. The physics of all four systems was solved on the QantmOrchstrtr
solver stack (CalculiX, Elmer) from Gmsh meshes, reproducing the closed-form
fields within finite-element tolerance (Table~\ref{tab:solver}); the
comparable horizons are quantified on trained models in follow-on work.}
\label{fig:four_paradigm}
\end{figure*}

Two further questions, why trained statistical models stay consistent for a short horizon before diverging and whether comparing a model that must parse pixels against one given typed physical variables is fair, are addressed in Supplementary Information. The short empirical horizon is a direct consequence of Lemma 1: next-step supervision and implicit regularization slow the accumulation of representation bias but cannot remove it, because the learned transition function still operates on entangled latent states. The task asymmetry is not a flaw in the comparison but its central point, since an architecture that entangles perception and simulation inherits the compounding bias $t\cdot\kappa(p)$ that the PGSA replaces with a one-time bounded perceptual error at $t=0$.

\subsection{Why the PGSA Assumption Is Not Circular}
\label{sec:noncircular}

A likely criticism is that assuming the existence of a causal basis (Definition~\ref{def:basis}) is circular: if the architecture already contains the correct laws of physics, then of course it can simulate them accurately.

This criticism conflates \textbf{discovery} with \textbf{execution}. The theorems presented here are not about discovering the laws of physics from data. They are statements about what follows \textit{once a causal basis exists}. Theorems~\ref{thm:ident} and~\ref{thm:temp} establish that exact identifiability and near-infinite temporal consistency are mathematically guaranteed when execution is decoupled from statistical perception, and that embedding physical laws into a statistical latent space guarantees eventual divergence. The assumption is not circular; it precisely isolates the architectural requirement for temporal consistency.


\section{Approximate Identifiability: The Incomplete Basis Case}

\begin{theorem}[Approximate Symbolic Identifiability]\label{thm:approx}
Let $W = W_{\mathrm{known}} + W_{\mathrm{unknown}}$, where
$W_{\mathrm{known}}$ is covered by $\mathcal{A}$ and
$W_{\mathrm{unknown}}$ is the uncovered residual.
Let $W_{\mathrm{known}}$ be Lipschitz continuous with constant
$L_{\mathrm{known}}$.
Then for all $z \in \mathcal{D}$:
\begin{align*}
  &\bigl\|W_{\mathrm{known}}\bigl(W_{\mathrm{known}}(z)
    + W_{\mathrm{unknown}}(z)\bigr) \\
  &\qquad - W_{\mathrm{known}}(C(z))\bigr\|_2
  \leq L_{\mathrm{known}} \cdot M,
\end{align*}
where $M = \sup_{z \in \mathcal{D}} \|W_{\mathrm{unknown}}(z)\|_2$.
\end{theorem}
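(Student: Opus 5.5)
The plan is a one-line Lipschitz estimate, once we pin down what $C$ denotes. I take $C$ to be the composition of atoms in $\mathcal{A}$ that realizes the covered part of the dynamics. This is the incomplete-basis analogue of Definition~\ref{def:basis}, with $C(z) = W_{\mathrm{known}}(z)$ for all $z \in \mathcal{D}$. Everything else is one application of the Lipschitz hypothesis followed by a supremum.

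\emph{Step 1 (identify the two arguments).} By the decomposition $W = W_{\mathrm{known}} + W_{\mathrm{unknown}}$, the first argument is
\[
  W_{\mathrm{known}}(z) + W_{\mathrm{unknown}}(z) = W(z),
\]
the true next state. Since $C$ is a causal basis for the known part, the second argument is $C(z) = W_{\mathrm{known}}(z)$. The quantity to be bounded is therefore
\[
  \bigl\|W_{\mathrm{known}}(W(z)) - W_{\mathrm{known}}(W_{\mathrm{known}}(z))\bigr\|_2 .
\]
This is the error incurred at the second step when the symbolic operator acts on the true state rather than on the state produced by the registry.

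\emph{Step 2 (Lipschitz contraction).} Apply the Lipschitz property of $W_{\mathrm{known}}$ with constant $L_{\mathrm{known}}$ to the pair $x = W(z)$ and $y = C(z)$:
\[
  \bigl\|W_{\mathrm{known}}(x) - W_{\mathrm{known}}(y)\bigr\|_2 \leq L_{\mathrm{known}}\,\|x - y\|_2 .
\]
The difference of the arguments is exactly
\[
  x - y = \bigl(W_{\mathrm{known}}(z) + W_{\mathrm{unknown}}(z)\bigr) - W_{\mathrm{known}}(z) = W_{\mathrm{unknown}}(z).
\]

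\emph{Step 3 (supremum).} Bound $\|W_{\mathrm{unknown}}(z)\|_2 \leq M$ using the definition of $M$ as a supremum over $\mathcal{D}$. This uses $z \in \mathcal{D}$, and it is where the restriction to $\mathcal{D}$ in the statement enters. When $M = \infty$ the bound is vacuous, so nothing needs to be checked in that case.

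\emph{Main obstacle.} The only delicate point is domain bookkeeping, not analysis. The Lipschitz hypothesis must apply at the points $W(z)$ and $C(z)$, and these need not lie in $\mathcal{D}$. I will read ``Lipschitz continuous'' as global on the ambient space, or at least on a set containing the image $W(\mathcal{D}) \cup C(\mathcal{D})$, and state this explicitly. I will also need $C = W_{\mathrm{known}}$ on $\mathcal{D}$, which is the meaning of ``covered by $\mathcal{A}$''. If the paper intends only that $C$ approximates $W_{\mathrm{known}}$ up to rounding, Theorem~\ref{thm:temp} would contribute an extra $L_{\mathrm{known}}\mu$ term. Under exact coverage that term does not appear, and the bound $L_{\mathrm{known}} \cdot M$ follows. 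It tightens monotonically as the registry grows, because enlarging $\mathcal{A}$ shrinks $W_{\mathrm{unknown}}$ and hence $M$. This algebraic core is what would be checked in Lean.
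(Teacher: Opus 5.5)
Your proposal is correct and follows the paper's proof essentially line for line. You identify $C(z)=W_{\mathrm{known}}(z)$ on $\mathcal{D}$, apply the Lipschitz bound to the pair $W(z)$, $C(z)$ whose difference is $W_{\mathrm{unknown}}(z)$, and take the supremum to get $L_{\mathrm{known}}\cdot M$. Your explicit domain bookkeeping (Lipschitz on a set containing $W(\mathcal{D})\cup C(\mathcal{D})$) is a small addition the paper leaves implicit. Your closing monotonicity remark is not part of the theorem, and it holds only under the additive-decomposition caveat the paper states.
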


As the Atom Registry grows by adding new atoms, the uncovered residual
$W_{\mathrm{unknown}}$ loses the terms the new atoms cover, and the bound
$L_{\mathrm{known}} \cdot M$ tightens accordingly, without retraining.
When the generator decomposes additively into atomic terms, so that each
added atom removes its own contribution from the residual without
interacting with the remainder, $M$ is non-increasing in the size of the
registry.
This stands in direct contrast to statistical models, which must be
retrained from scratch to incorporate new physical knowledge.


\begin{figure}[!htbp]
\centering
\includegraphics[width=\columnwidth]{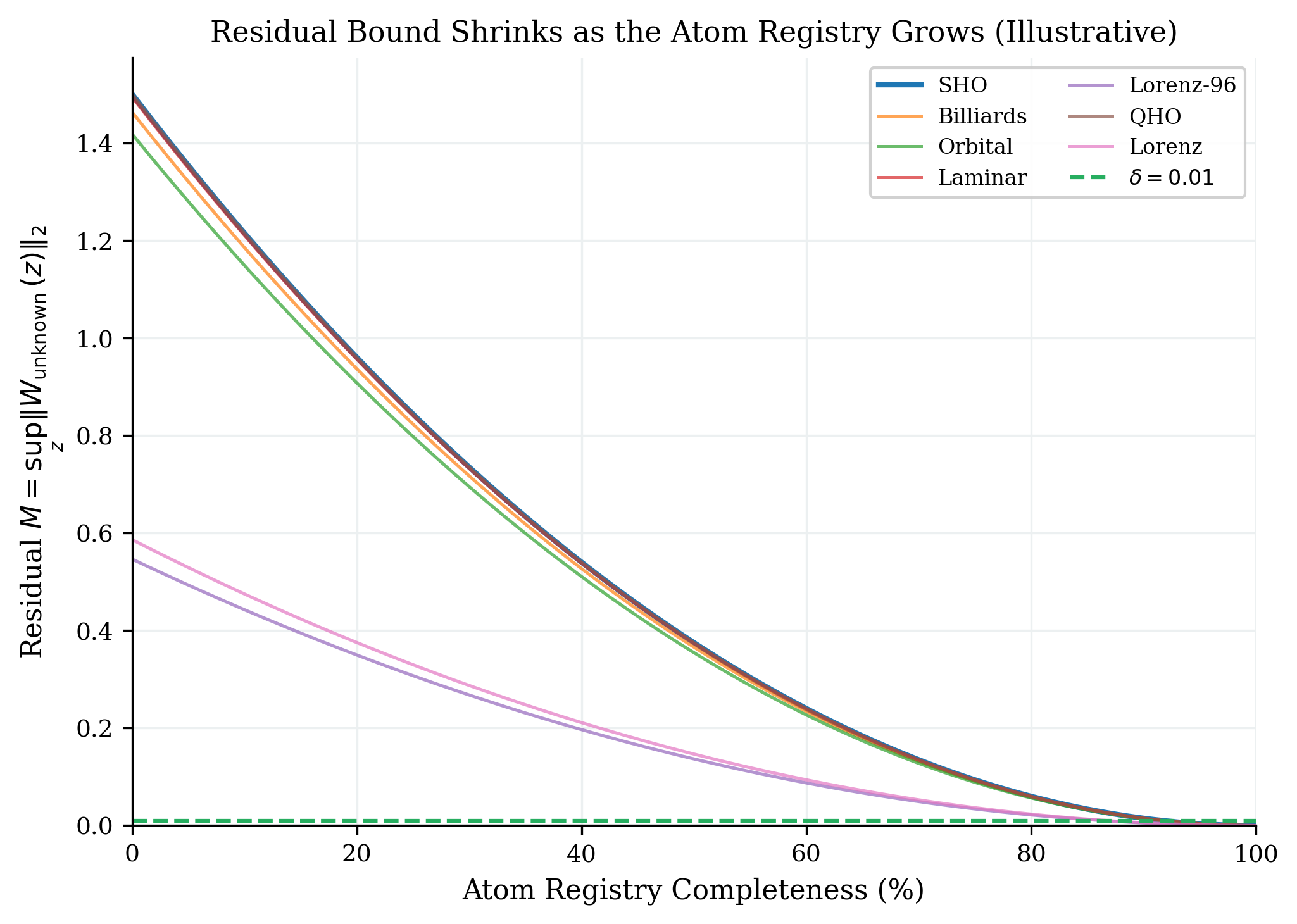}
\caption{Approximate identifiability as the Atom Registry
grows. Residual error $M = \sup_z \|W_{\mathrm{unknown}}(z)\|_2$ decreases
as added atoms remove their terms from the uncovered residual, without
retraining.
Statistical models require full retraining to incorporate new physical
knowledge.}
\label{fig:approx_ident}
\end{figure}

\section{Related Work}

\textbf{World model architectures.}
Ha and Schmidhuber~\cite{ha2018} established the modern template of
learning a compressed latent world model and rolling policies forward
inside it, and the Dreamer line~\cite{hafner2025} scaled that template
to diverse control. The joint-embedding predictive architecture
program~\cite{lecun2022} replaces generative reconstruction with
representation-space prediction, instantiated for images by
I-JEPA~\cite{assran2023} and for video by V-JEPA~\cite{bardes2024},
with LeJEPA~\cite{klindt2026,balestriero2025} supplying the theoretical
account of what those embeddings identify. All of these architectures
are statistically aligned in the sense of Section~\ref{sec:problem},
so Theorem~\ref{thm:ceiling} and Proposition~\ref{prop:floor} apply
to them directly.

\textbf{Physics-informed neural networks.}
Raissi, Perdikaris, and Karniadakis~\cite{raissi2019} introduced
PINNs, which encode governing equations as soft training constraints.
PINNs improve data efficiency within the regime of the encoded
equation, but the network weights are learned and the representation
is not guaranteed to be identifiable.
The PGSA executes physical laws as exact deterministic functions,
not as regularizers.

\textbf{Hamiltonian and Lagrangian neural networks.}
Greydanus et al.~\cite{greydanus2019} and Lutter et al.~\cite{lutter2019}
parameterize the Hamiltonian or Lagrangian with a neural network,
ensuring energy conservation by construction.
These architectures encode a single conserved quantity; the PGSA
encodes the full causal generator, including dissipative and
non-conservative forces.

\textbf{Neural ODEs and Koopman operators.}
Chen et al.~\cite{chen2018} introduced neural ODEs, which
parameterize the time-derivative of the hidden state.
Lusch, Kutz, and Brunton~\cite{lusch2018} proposed Koopman
embeddings that globally linearize nonlinear dynamics.
Both approaches learn a latent space in which dynamics are simpler,
but neither guarantees identifiability of the true physical latent
variables.

\textbf{Symbolic regression.}
Cranmer et al.~\cite{cranmer2020} developed methods to distill
symbolic expressions from learned representations.
Brunton, Proctor, and Kutz~\cite{brunton2016} recover governing
equations by sparse regression over a function library, the same
object the Atom Registry makes executable.
Symbolic regression is complementary to the PGSA: it can discover
new atoms for the registry from data, while the PGSA provides the
formal framework for executing those atoms with identifiability
guarantees.

\textbf{Nonlinear ICA and identifiability.}
Hyv\"arinen and Pajunen~\cite{hyvarinen1999} showed that nonlinear ICA
is unidentifiable without further structure, and Locatello et
al.~\cite{locatello2019} sharpened the point for disentanglement:
unsupervised recovery of the true factors is impossible without
inductive bias. Khemakhem et al.~\cite{khemakhem2020} and Hyv\"arinen and
Morioka~\cite{hyvarinen2016} established conditions under which
nonlinear generative models can recover the true latent variables.
Klindt et al.~\cite{klindt2026} applied these results to the JEPA
setting.
The present work extends the identifiability analysis to the
symbolic setting, where the causal generator, rather than the
statistical distribution, is the source of identifiability.

\section{Limitations}
\label{sec:limitations}

This section states plainly what the results do and do not establish.

\textbf{Assumption~1 is an assumption.} The unbounded-growth claims
(Proposition~\ref{prop:diverge}, Theorem~\ref{thm:ceiling}) hold under
bias coherence. If the propagated biases were to cancel systematically,
the statistical error would plateau near the floor of
Proposition~\ref{prop:floor} instead of growing past every tolerance.
The impossibility of near-infinite temporal consistency is unaffected,
because it rests on the floor, not on growth.

\textbf{The floor is a linear-identifiability statement.}
Proposition~\ref{prop:floor} bounds linear readouts of the aligned
representation, which is the framework of Klindt et
al.~\cite{klindt2026}. A nonlinear decoder trained with access to
ground-truth latents sits outside that framework; the claim is that the
alignment objective alone does not deliver the latents, not that no
supervised procedure could.

\textbf{The formalization covers algebraic cores.} The Lean~4
development verifies the algebraic cores of
Theorems~\ref{thm:ident}, \ref{thm:temp}, \ref{thm:ceiling},
and~\ref{thm:approx} and the pixel-space results; it does not encode
the Klindt et al.\ converse (taken as an external premise),
Assumption~1, or Proposition~\ref{prop:floor}.
Appendix~\ref{app:lean} maps each paper statement to its Lean
counterpart and states exactly what is and is not machine-checked.

\textbf{Chaos bounds every architecture.} For systems with a positive
Lyapunov exponent~\cite{lorenz1963,eckmann1985}, the PGSA horizon is
finite at $\ln(\delta/\mu)/\lambda$, about $31$ Lyapunov times on the
measured systems. This is a property of the dynamics, not of any model,
and no architecture escapes it.

\textbf{The PGSA guarantee is conditional on registry coverage.}
Theorem~\ref{thm:ident} assumes the Atom Registry contains a causal
basis for the generator; Theorem~\ref{thm:approx} quantifies the
partial-coverage case. Discovering new atoms from data is complementary
work in symbolic regression~\cite{brunton2016,cranmer2020}, not a
capability of the PGSA itself.

\textbf{Measurements use the population optimum.} The identifiability
measurements evaluate the closed-form optimum of the alignment
objective with no encoder trained, which isolates the objective's
ceiling from optimization noise but leaves finite-sample, trained-model
behavior unmeasured. Figure~\ref{fig:error_growth} (left panel) and
Figure~\ref{fig:approx_ident} are theory illustrations, not
experiments.

\section{Conclusion}

We have proven five theorems, three propositions, and two corollaries that together
establish a complete theoretical account of temporal consistency
across all three World Model paradigms.

For pixel-space models, Proposition~\ref{prop:blind} and
Corollary~\ref{cor:pixel} prove that the failure is structural and irremediable:
because the rendering function $g$ is not injective, causal
variables are permanently invisible to the model.

For latent-space models, Theorem~\ref{thm:ceiling} proves that the
failure is objective-level: the spectral alignment loss has a
non-Gaussian ceiling, and the optimal representation for any
non-Gaussian world is a biased distortion of the true latents. Transformer world models with statistically aligned latents fall under the same ceiling (Corollary~\ref{cor:attention}): attention enlarges the constant in the horizon but cannot remove it. Proposition~\ref{prop:floor} adds an unconditional floor: no rollout policy brings the error below the measured representation bias.

For physics-grounded symbolic architectures, Theorems~\ref{thm:ident}
and~\ref{thm:temp} prove that exact identifiability and near-infinite
temporal consistency are achievable, with error bounded only by
numerical precision and the Lipschitz constant of the physical
system.
Theorem~\ref{thm:approx} proves that when the Atom Registry is
incomplete, the error is bounded by $L_{\mathrm{known}} \cdot M$, a
bound that tightens as added atoms shrink the uncovered residual,
without retraining.

This work does not establish that symbolic architectures are universally superior to all possible future world models. Rather, it establishes that under the assumptions identified by Klindt et al.~\cite{klindt2026}, statistical identifiability does not guarantee long-horizon temporal consistency, whereas symbolic causal execution provides a constructive route to achieving it. The practical consequence is that the temporal consistency horizon of a PGSA is bounded by physics, not by statistics. For any non-chaotic physical system, a PGSA can maintain accurate predictions for an unbounded number of transitions. Neither pixel-space nor latent-space models can make this claim.

\section*{Data Availability}

The Lean~4 proof file (\texttt{pgsa\_theorems\_complete.lean})
and the Mathlib4 project configuration (\texttt{lakefile.toml})
are available at \url{https://github.com/ARYA-Labs-Public/pgsa-world-model-proofs}.
The simulation trajectories and the measurement script that produces them
(\texttt{save\_sim\_artifacts\_corrected.py}, emitting \texttt{metadata.json}
and one \texttt{.npz} file per system) are provided in the reproducibility
package. The non-Gaussianity and Lyapunov exponents in Table~\ref{tab:steps}
are computed from those trajectories; the integrations use scipy RK45
(rtol $=10^{-10}$, atol $=10^{-12}$). The finite-element validation in
Table~\ref{tab:solver} was produced on the QantmOrchstrtr solver stack,
CalculiX for the structural and modal analyses and Elmer for the magnetostatic
and thermal analyses, from Gmsh meshes (OCC kernel~4.15.1) supplied to the
solvers as cloud object handles; the solver result files and mesh manifests
are included in the reproducibility package.

\appendix

\section{Full Proofs}\label{meth:proofs}

This appendix gives complete derivations for every formal result in
the paper, in the order the results appear.

\subsection*{Proof of Proposition~\ref{prop:diverge}
  (Statistical Temporal Divergence)}\label{meth:prop1}

Fix a non-Gaussian $p(z)$ and a statistical World Model trained via
spectral alignment. By Klindt et al.\ Theorem~5.2~\cite{klindt2026},
the optimal representation under the alignment objective is
$h^*(z) = z + b(z)$ with $b(z) \neq 0$ on a set of positive
$p$-measure, so $\sup_z \|b(z)\|_2 > 0$.

By Lemma~1, the rollout error after $t$ steps satisfies, to first
order in the bias,
\[
  \epsilon_t \;\geq\;
  \Bigl\| \sum_{i=0}^{t-1} \hat{W}^i\!\bigl(b(z^{(t-1-i)})\bigr) \Bigr\|_2 .
\]
By Assumption~1, the summands do not systematically cancel: there is
an effective per-step bias $\kappa(p) > 0$ with
$\epsilon_t \geq t \cdot \kappa(p)$ for all $t \geq 1$.

Fix any tolerance $\delta > 0$ and set
$T^* = \lceil \delta / \kappa(p) \rceil$. For every $t > T^*$,
\[
  \epsilon_t \;\geq\; t \cdot \kappa(p)
  \;>\; T^* \cdot \kappa(p)
  \;\geq\; \frac{\delta}{\kappa(p)} \cdot \kappa(p) \;=\; \delta .
\]
Hence $\epsilon_t > \delta$ for all $t > T^*$, and $T^*$ is finite
because $\kappa(p) > 0$. Theorem~\ref{thm:ceiling} makes the rate
precise for conservative systems.
$\square$

\subsection*{Proof of Theorem~\ref{thm:ident}
  (Symbolic Identifiability)}

By Definition~\ref{def:basis}, the Atom Registry $\mathcal{A}$
contains a causal basis for $W$: there exists a finite composition
$C = a_k \circ \cdots \circ a_1$ with each $a_i \in \mathcal{A}$ such
that $C(z) = W(z)$ for all $z$ in a dense set
$\mathcal{D} \subseteq \mathrm{dom}(W)$. The restriction to a dense
set is deliberate: exact functional equality over all of
$\mathrm{dom}(W)$ is undecidable in general by Richardson's
Theorem~\cite{richardson1968}, whereas agreement on a dense set is
both sufficient for the error claims below and verifiable atom by
atom.

Two structural facts about the PGSA close the argument. First, the
PGSA state is the vector of typed physical variables itself; there is
no mixing function $g$ between the latent and the model's input, so
the identity readout $h(z) = z$ is available by construction and the
identifiability question does not involve inverting any learned map.
Second, the PGSA's forward operator is exactly the composition $C$,
executed deterministically.

Fix $z \in \mathcal{D}$. The predicted transition is
$\hat{W}(z) = C(z) = W(z)$, so the single-step error is
\[
  \epsilon_1(z) \;=\; \bigl\| \hat{W}(z) - W(z) \bigr\|_2
  \;=\; \bigl\| C(z) - W(z) \bigr\|_2 \;=\; 0 .
\]
Both conclusions hold for every $z \in \mathcal{D}$ and make no
reference to $p(z)$, so the result is distribution-free. The
algebraic core (existence of the composition and the vanishing of
$\epsilon_1$ on $\mathcal{D}$) is machine-checked in Lean~4
(\texttt{symbolic\_identifiability}; Appendix~\ref{app:lean}).
$\square$

\subsection*{Proof of Theorem~\ref{thm:temp}
  (Near-Infinite Temporal Consistency)}

Let $L = \|W\|_{\mathrm{Lip}}$ and let $\hat{W}$ denote the
floating-point implementation of the registry composition. By
Theorem~\ref{thm:ident} the exact composition satisfies $C = W$ on
$\mathcal{D}$, so the only error source is rounding: one step of
$\hat{W}$ computes $W$ up to a perturbation of norm at most $\mu$,
the machine precision (IEEE~754 double: $\mu \approx 2.2 \times
10^{-16}$~\cite{ieee754}).

Write $z^{(t)} = W^t(z^{(0)})$ for the true trajectory and
$\hat{z}^{(t)} = \hat{W}^t(z^{(0)})$ for the computed one, and let
$\epsilon_t = \|\hat{z}^{(t)} - z^{(t)}\|_2$. One step obeys the
recursion
\[
  \epsilon_{t}
  \;=\; \bigl\| \hat{W}(\hat{z}^{(t-1)}) - W(z^{(t-1)}) \bigr\|_2
  \;\leq\; \underbrace{\bigl\| W(\hat{z}^{(t-1)}) - W(z^{(t-1)}) \bigr\|_2}_{\leq\, L\, \epsilon_{t-1}}
  \;+\; \underbrace{\bigl\| \hat{W}(\hat{z}^{(t-1)}) - W(\hat{z}^{(t-1)}) \bigr\|_2}_{\leq\, \mu}
  \;\leq\; L\, \epsilon_{t-1} + \mu ,
\]
the standard forward error recursion for iterated
maps~\cite{higham2002}.

We prove $\epsilon_t \leq \mu \sum_{k=0}^{t-1} L^k$ by induction.
Base case $t = 1$: $\epsilon_0 = 0$, so
$\epsilon_1 \leq L \cdot 0 + \mu = \mu = \mu \sum_{k=0}^{0} L^k$.
Inductive step: if $\epsilon_t \leq \mu \sum_{k=0}^{t-1} L^k$, then
\[
  \epsilon_{t+1} \;\leq\; L\, \epsilon_t + \mu
  \;\leq\; \mu \sum_{k=0}^{t-1} L^{k+1} + \mu
  \;=\; \mu \sum_{k=1}^{t} L^{k} + \mu
  \;=\; \mu \sum_{k=0}^{t} L^{k} .
\]
The closed forms follow: $\epsilon_t \leq t \cdot \mu$ for $L = 1$,
and $\epsilon_t \leq \mu (L^t - 1)/(L - 1)$ otherwise. Three regimes:
for conservative dynamics ($L = 1$) the bound grows linearly and the
horizon at tolerance $\delta$ is $T^*_{\mathrm{PGSA}} = \delta/\mu$;
for dissipative dynamics ($L < 1$) the bound is uniformly bounded by
$\mu/(1 - L)$ for all $t$, so the horizon is infinite at any
$\delta > \mu/(1-L)$; for chaotic dynamics ($L > 1$) the bound grows
as $\mu L^t$ and the horizon is $\ln(\delta/\mu)/\ln L$, which is
Lyapunov instability of the physical system, not a property of the
architecture. The two-trajectory form of the propagation bound
($\|W^t(z_0) - W^t(z_1)\| \leq L^t \|z_0 - z_1\|$) is machine-checked
in Lean~4 (\texttt{temporal\_consistency}; Appendix~\ref{app:lean});
the rounding-injection recursion above is verified by hand.
$\square$

\subsection*{Proof of Theorem~\ref{thm:ceiling}
  (Statistical Temporal Ceiling)}

By Klindt et al.\ Theorem~5.2~\cite{klindt2026}, for any non-Gaussian
$p(z)$ the optimal representation under spectral alignment carries a
systematic bias $b(z) = h^*(z) - z$ with $\sup_z \|b(z)\|_2 > 0$; this
premise is external to the present development. Assumption~1 converts
the representational bias into an effective per-step accumulation
rate $\kappa(p) \in (0, \sup_z \|b(z)\|_2]$ with
$\epsilon_t \geq t \cdot \kappa(p)$ for all $t \geq 1$ (Lemma~1).

Fix $\delta > 0$ and set
$T^*(M, p) = \lceil \delta / \kappa(p) \rceil$. For $t > T^*$,
\[
  \epsilon_t \;\geq\; t\, \kappa(p) \;>\; T^*\, \kappa(p)
  \;\geq\; \delta ,
\]
so the tolerance is breached at every step beyond $T^*$, which is
finite because $\kappa(p) > 0$. For conservative systems ($L = 1$)
the growth rate is exactly the accumulation rate, giving
$T^*_{\mathrm{stat}} = \delta/\kappa(p)$.

Capacity independence is structural rather than quantitative:
$\kappa(p)$ is a property of the optimum $h^*$ that the alignment
objective defines, not of any particular network. Increasing model
capacity or training data moves the learned representation closer to
$h^*$; it does not move $h^*$. A larger model therefore approaches the
same biased optimum faster and inherits the same ceiling. The
arithmetic core (given a per-step bias $b > 0$, the horizon
$\lceil \delta/b \rceil$ exists and is breached thereafter) is
machine-checked in Lean~4 (\texttt{statistical\_temporal\_ceiling};
Appendix~\ref{app:lean}); the Klindt et al.\ converse and
Assumption~1 are not formalized.
$\square$

\subsection*{Proof of Proposition~\ref{prop:floor}
  (Representation Bias Floor)}

The model observes the world only through $h^*$: every quantity it
computes at time $t$ is a function of representations
$\{h^*(z_s)\}_{s \leq t}$. Among rollout policies, the most favorable
for accuracy is to re-encode the world at every step and read the
state off the current representation, $\hat{z}_t = A\,h^*(z_t) + c$;
any policy that instead propagates an internal state adds rollout
drift on top of readout error. It therefore suffices to lower-bound
the favorable case.

For any fixed readout $(A, c)$,
\[
  \mathbb{E}_p \bigl\| A\,h^*(z_t) + c - z_t \bigr\|_2^2
  \;\geq\; \min_{A', c'}\; \mathbb{E}_p \bigl\| A'\,h^*(z) + c' - z \bigr\|_2^2
  \;=\; \sigma^2_{\mathrm{res}}(p),
\]
the population residual of the best affine readout of $h^*$, where
the equality of marginals holds because the systems are measured at
stationarity. The right-hand side does not depend on $t$, on the
transition model, on capacity, or on data volume.

Positivity: suppose $\sigma^2_{\mathrm{res}}(p) = 0$. Then $z$ is
$p$-almost-surely an affine function of $h^*(z)$, so $h^*$ is
linearly invertible to the true latents and the representation is
linearly identifiable. By the Klindt et al.\
converse~\cite{klindt2026}, linear identifiability under spectral
alignment holds only for Gaussian $p$, contradicting the assumption
that $p$ is non-Gaussian. Hence $\sigma^2_{\mathrm{res}}(p) > 0$.

For the measured instantiation, standardize each latent coordinate to
unit variance. The best affine readout is the population linear
regression of $z$ on $h^*(z)$, whose per-coordinate residual variance
is $1 - R^2$, so $\sigma_{\mathrm{res}} = \sqrt{1 - R^2}$ in
standardized units. Table~\ref{tab:linident} reports the measured
$R^2$ for the seven physical systems; the implied floors run from
$\sqrt{1 - 0.994} = 0.077$ (Lorenz-96) to $\sqrt{1 - 0.898} = 0.32$
(Billiards), between $14.5$ and $15.2$ orders of magnitude above
$\mu \approx 2.2 \times 10^{-16}$. Near-infinite temporal consistency
requires error of order $\mu$, so the property is unattainable for
statistical models within the linear-identifiability framework, with
no coherence assumption.
$\square$

\subsection*{Proof of Theorem~\ref{thm:order}
  (Temporal Consistency Ordering)}

The chain has two inequalities; we establish each.

\textbf{$T^*_\pi \leq T^*_{\mathrm{stat}}$.} A pixel-space model
suffers both error sources. Its encoder is statistically aligned, so
it carries the representation bias $\kappa(p)$ of
Theorem~\ref{thm:ceiling} on the latents it does retain. In addition,
for a system with unobservable latent dynamics ($\lambda_\perp > 0$)
the rendering $g$ is non-injective (Proposition~\ref{prop:blind}), so
distinct latent states collapse to the same observation and an
irreducible component of state error is present already at $t = 0$,
before any transition is applied. The pixel-space error therefore
dominates the statistical error at every $t$, and the first time
either process exceeds a fixed $\delta$ is no later for the
pixel-space model: $T^*_\pi \leq T^*_{\mathrm{stat}}$.

\textbf{$T^*_{\mathrm{stat}} < T^*_{\mathrm{PGSA}}$.} By
Theorem~\ref{thm:ceiling}, $T^*_{\mathrm{stat}}$ is finite for every
non-Gaussian $p$, and for conservative systems equals
$\delta/\kappa(p)$. By Theorem~\ref{thm:temp}, the PGSA error obeys
$\epsilon_t \leq t \cdot \mu$ for $L = 1$, so its horizon is
$T^*_{\mathrm{PGSA}} = \delta/\mu$. The hypothesis
$\kappa(p) \gg \mu$ gives
$\delta/\kappa(p) \ll \delta/\mu$, hence strict inequality.

\textbf{The ratio.} For conservative systems both horizons are exact:
\[
  \frac{T^*_{\mathrm{PGSA}}}{T^*_{\mathrm{stat}}}
  \;=\; \frac{\delta/\mu}{\delta/\kappa(p)}
  \;=\; \frac{\kappa(p)}{\mu},
\]
independent of $\delta$, which is the sense in which the ratio is
universal. For an effective $\kappa = 0.01$ the ratio is
$0.01 / (2.2 \times 10^{-16}) \approx 4.5 \times 10^{13}$.
$\square$

\subsection*{Proof of Corollary~\ref{cor:attention}
  (Attention Does Not Remove the Statistical Ceiling)}

Let $M_T$ be a transformer World Model on a non-Gaussian system whose
latent encoder is trained by statistical alignment. The tokens the
transformer attends over are encodings $h^*(z)$ (directly, via a
JEPA-style embedding objective, or indirectly, via a variational
bottleneck), so the bias $b(z) = h^*(z) - z$ of the Klindt et al.\
converse enters the token stream before any attention head runs.

Attention and the subsequent feedforward stack form a deterministic
map on those tokens. Composing a deterministic map with a biased
encoding yields a transition operator on the biased representation,
which is precisely a statistical World Model in the sense of
Theorem~\ref{thm:ceiling} with the same $h^*$. The theorem therefore
applies verbatim: under Assumption~1 there is a finite horizon
$T^*(M_T, p)$ beyond which $\epsilon_t > \delta$, and
Proposition~\ref{prop:floor} floors the error unconditionally at
$\sigma_{\mathrm{res}}(p)$.

What attention can change is the constant: a stronger in-context
transition operator can reduce the rollout drift added on top of the
readout error, moving the effective per-step accumulation toward its
floor. It cannot change the existence of $T^*$, because the bias it
would need to remove lives in the tokens it receives, not in the map
it applies.
$\square$

\subsection*{Proof of Theorem~\ref{thm:approx}
  (Approximate Symbolic Identifiability)}

Decompose the generator as $W = W_{\mathrm{known}} + W_{\mathrm{unknown}}$,
where $\mathcal{A}$ contains a causal basis for $W_{\mathrm{known}}$
and $M = \sup_{z \in \mathcal{D}} \|W_{\mathrm{unknown}}(z)\|_2$. By
Definition~\ref{def:basis}, the registry composition satisfies
$C(z) = W_{\mathrm{known}}(z)$ on $\mathcal{D}$.

Fix $z \in \mathcal{D}$. The true next state is
$W(z) = W_{\mathrm{known}}(z) + W_{\mathrm{unknown}}(z)$ and the
predicted next state is $C(z) = W_{\mathrm{known}}(z)$. Propagating
both one further step through the known dynamics and applying the
Lipschitz property of $W_{\mathrm{known}}$:
\begin{align*}
  \bigl\| W_{\mathrm{known}}\bigl(W(z)\bigr)
        - W_{\mathrm{known}}\bigl(C(z)\bigr) \bigr\|_2
  &= \bigl\| W_{\mathrm{known}}\bigl(W_{\mathrm{known}}(z)
        + W_{\mathrm{unknown}}(z)\bigr)
        - W_{\mathrm{known}}\bigl(W_{\mathrm{known}}(z)\bigr) \bigr\|_2 \\
  &\leq L_{\mathrm{known}}
        \bigl\| \bigl(W_{\mathrm{known}}(z) + W_{\mathrm{unknown}}(z)\bigr)
        - W_{\mathrm{known}}(z) \bigr\|_2 \\
  &= L_{\mathrm{known}} \, \bigl\| W_{\mathrm{unknown}}(z) \bigr\|_2
  \;\leq\; L_{\mathrm{known}} \cdot M .
\end{align*}
The single-step state error is the residual itself,
$\|W(z) - C(z)\|_2 = \|W_{\mathrm{unknown}}(z)\|_2 \leq M$, so the
bound above is the propagated form of the same quantity. Both
inequalities are tight exactly when the residual attains its
supremum and the known dynamics realize their Lipschitz constant
along it. This proof is machine-checked in full in Lean~4
(\texttt{approximate\_symbolic\_identifiability};
Appendix~\ref{app:lean}).
$\square$

\subsection*{Pixel-space results}

\begin{proposition}[Pixel-Space Causal Blindness]\label{prop:blind}
Let $M_\pi$ be a pixel-space World Model and let $g\colon
\mathbb{R}^n \to \mathbb{R}^d$ ($n > d$) be the rendering function.
If $g(z) = g(z')$ for $z \neq z'$, then $M_\pi$ cannot distinguish
$z$ from $z'$ and therefore cannot correctly predict the future
trajectory of either.
\end{proposition}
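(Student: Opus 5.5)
The plan is to reduce the claim to a statement about functions of the observation: a pixel-space model only ever receives $g(z)$, so all of its outputs, whether the representation, the predicted next observation, or the whole rollout, are functions of $g(z)$ alone. First I will fix the formal model. $M_\pi$ is a (possibly stochastic) map $\Phi$ taking an observation $x \in \mathbb{R}^d$, or a finite history of observations, to a predicted trajectory $\hat{z}^{(1)},\hat{z}^{(2)},\dots$ or to predicted observations. Determinism of $\Phi$ is not essential: if $\Phi$ is stochastic, "function of $g(z)$" becomes "the output distribution depends on $z$ only through $g(z)$", and the argument carries over.

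\textbf{Indistinguishability.} Given $g(z) = g(z')$ with $z \neq z'$, the model's inputs coincide. Hence $\Phi(g(z)) = \Phi(g(z'))$, or the output laws coincide in the stochastic case. So no internal quantity, and in particular no readout $\hat{z}_0$, can differ between the two states. This is the precise sense of "cannot distinguish".

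\textbf{Failure of prediction.} Next I will show that a common prediction cannot be correct for both trajectories. Here there is a genuine gap: if $W^t(z) = W^t(z')$ for all $t \ge 1$ (for instance $z$ and $z'$ differ only in a coordinate that never influences the future), the common prediction could be correct for both. I will therefore make the statement precise in one of two ways. The first option is to interpret "predict the trajectory" as including the current state. Then $\hat{z}^{(0)}$ cannot equal both $z$ and $z'$, so by the triangle inequality $\max(\|\hat z^{(0)} - z\|, \|\hat z^{(0)} - z'\|) \ge \|z - z'\|/2 > 0$. The second option is to add the hypothesis that the trajectories separate, i.e.\ there is some $t$ with $W^t(z) \neq W^t(z')$. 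This is exactly what the $\lambda_\perp > 0$ condition in Theorem~\ref{thm:order} is meant to encode. Then at that $t$ the common prediction $\hat z^{(t)}$ satisfies $\max_{w \in \{z,z'\}} \|\hat z^{(t)} - W^t(w)\|_2 \ge \tfrac12\|W^t(z) - W^t(z')\|_2 > 0$, again by the triangle inequality. So at least one of the two trajectories is mispredicted. The bound also yields the quantitative per-step floor used later in Corollary~\ref{cor:pixel}.

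\textbf{Genericity.} Finally, I will note why the hypothesis is not vacuous. For linear $g$ with $n > d$, rank--nullity gives $\dim\ker g \ge n-d > 0$, so $z' = z + v$ with $v \in \ker g \setminus\{0\}$ always exists.

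\textbf{Main obstacle.} The main obstacle is the word "therefore" in the statement. Non-injectivity of $g$ alone does not imply that future trajectories differ. I expect to handle this by the separation hypothesis (second option), stating explicitly that it is the content of unobservable latent dynamics. The first option is the fallback if the statement is intended to cover the state itself.
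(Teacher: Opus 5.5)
Your argument is the paper's: every output of $M_\pi$ is a function of $x=g(z)$, so the predictions for $z$ and $z'$ coincide, and failure then needs an extra separation hypothesis. The paper adds that hypothesis implicitly (``for any $W$ that separates them'', ``whenever $g\circ W(z)\neq g\circ W(z')$''), so your flagging of the gap behind ``therefore'' is exactly right.

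One adjustment concerns your ``predicted observations'' branch, which is the paper's framing, since its model predicts the next observation. There the relevant hypothesis is separation of the rendered continuations, $g(W^t(z))\neq g(W^t(z'))$, and your triangle-inequality bound should be applied in observation space; latent separation $W^t(z)\neq W^t(z')$ alone does not force an error in predicted observations. Likewise, for history-conditioned models, equal current frames do not by themselves make the inputs coincide.
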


\textit{Proof.} The model's input at any time is $x = g(z)$; every
quantity $M_\pi$ computes is a function of $x$. If $g(z) = g(z')$
with $z \neq z'$, the model's state after observing either world is
identical, so its prediction for the next observation is identical.
Because the latent states differ and the dynamics act on latents, the
true continuations $W(z)$ and $W(z')$ differ for any $W$ that
separates them, and a single prediction cannot equal both rendered
continuations whenever $g \circ W(z) \neq g \circ W(z')$. The model
is therefore wrong on at least one of the two worlds, and no amount
of training data distinguishes them, because the data itself is
identical. $\square$

\begin{corollary}[Pixel-Space Cannot Recover Causal Variables]\label{cor:pixel}
No pixel-space World Model can achieve linear identifiability of the
true latent variables $z$ from observations $x = g(z)$ when $g$ is
not injective.
The failure is structural, not empirical.
\end{corollary}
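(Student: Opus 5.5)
The plan is to reduce Corollary~\ref{cor:pixel} to Proposition~\ref{prop:blind} through the definition of linear identifiability. We take linear identifiability to mean that some affine readout of the model's representation recovers the true latent: there exist $A$ and $c$ with $A\,h(x) + c = z$ for every latent $z$, or $p$-almost surely.

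The first step is structural. Every representation a pixel-space model forms is a function of its input $x = g(z)$, so $h = f \circ g$ for some map $f$, whatever the architecture or training. Suppose $g$ is not injective, so there are $z \neq z'$ with $g(z) = g(z')$. Then $h(z) = f(g(z)) = f(g(z')) = h(z')$. Hence for any readout $(A,c)$,
\[
  A\,h(z) + c \;=\; A\,h(z') + c,
\]
and this common value cannot equal both $z$ and $z'$. The readout therefore fails on at least one of the two states. Since $A$, $c$, and $f$ were arbitrary, no choice of encoder, capacity, or data removes the failure. This is exactly the indistinguishability of Proposition~\ref{prop:blind}, applied to the readout map instead of the forward prediction. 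It is the sense in which the failure is \emph{structural}: it is a property of $g$, not of learning.

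The one point needing care, and the step I expect to be the main obstacle, is the almost-sure version of identifiability. A single collision pair has measure zero and would not contradict recovery $p$-almost surely. To close this gap I would strengthen the non-injectivity hypothesis using the setting the paper invokes. For a linear rendering with $n > d$, rank--nullity gives $\dim\ker g \geq n - d > 0$, so every fiber $g^{-1}(x)$ is an affine subspace of positive dimension. Then $z$ is not a measurable function of $x$ on a set of positive measure whenever $p$ puts non-degenerate conditional mass on the fibers, for example when $p$ has a density. Concretely, $\mathbb{E}\bigl[\mathrm{Var}(z \mid x)\bigr] > 0$, and the expected squared readout error $\mathbb{E}_p\|A\,h(x) + c - z\|_2^2$ is bounded below by this conditional variance.

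Given this lower bound, the residual is strictly positive, so exact linear recovery fails. In the language of Proposition~\ref{prop:floor}, this is a positive floor that exists even before any Gaussianity consideration. That completes the argument.
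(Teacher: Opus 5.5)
Your first step is the paper's proof. The paper notes that linear identifiability requires a readout with $f(g(z)) = z$ up to an invertible linear map, which forces $f \circ g$ and hence $g$ to be injective; your collision-pair contradiction $h(z) = h(z')$ is the same argument stated pointwise. Your almost-sure refinement is not in the paper, which works only with exact pointwise recovery, but it is correct under the hypotheses you add (linear $g$ with nontrivial kernel and $p$ with a density), and you are right that bare non-injectivity would not suffice for that version.
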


\textit{Proof.} Linear identifiability requires a readout $f$ with
$f(g(z)) = z$ up to an invertible linear map, which makes $f \circ g$
injective and hence $g$ injective. Non-injective $g$ admits no such
$f$: any left inverse of $g$ would make $g$ injective, a
contradiction. Both directions are machine-checked in Lean~4
(\texttt{pixel\_space\_causal\_blindness},
\texttt{pixel\_space\_no\_identifiability};
Appendix~\ref{app:lean}). $\square$

The non-injectivity of $g$ is not a pathological edge case. By the
rank-nullity theorem, any linear rendering
$g\colon \mathbb{R}^n \to \mathbb{R}^d$ with $n > d$ satisfies
$\dim \ker g = n - \operatorname{rank} g \geq n - d > 0$, so a
positive-dimensional affine subspace of latent states collapses to
every observation. Mass, charge, internal temperature, and quantum
state are physical latents that are not directly observable in pixel
space, so the collapsed directions are not exotic; they are the
variables engineering simulation exists to track.

\subsection*{Proof of Lemma 1 (Representation Bias Propagation)}

Write $b(z) = h^*(z) - z$ for the representation bias and let
$\hat{W}$ be the transition operator learned on the representation
space. The model never sees a true state: its state at time $t$ is
the biased image of whatever the rollout has produced.

At $t = 0$ the state is already distorted,
$\hat{z}^{(0)} = z^{(0)} + b(z^{(0)})$. At each subsequent step the
operator reads a biased state and emits an output displaced by the
propagated image of that bias. Expanding $\hat{W}$ around the true
state to first order in $b$ and writing $\hat{W}^i$ for the
propagated image of a displacement through $i$ further steps, the
displacements compose additively: by induction on $t$, the deviation
of the rollout from the true trajectory is, to first order,
\[
  \hat{z}^{(t)} - z^{(t)} \;=\;
  \sum_{i=0}^{t-1} \hat{W}^i\!\bigl(b(z^{(t-1-i)})\bigr)
  \;+\; O(\|b\|^2),
\]
each term being the bias injected $i$ steps ago, transported forward
by the learned dynamics. Taking norms gives the lemma's lower bound
on $\epsilon_t$; for affine $\hat{W}$ the expansion is exact and the
$O(\|b\|^2)$ term vanishes.

The summands are vectors and can in principle cancel; Assumption~1
rules out systematic cancellation and gives
$\epsilon_t \geq t \cdot \kappa(p)$, which grows without bound.
Figure~\ref{fig:error_growth} illustrates the resulting separation of
regimes.
$\square$

\section{Formal Verification Mapping}\label{app:lean}

The Lean~4 development (\texttt{pgsa\_theorems\_complete.lean},
Lean~4.31.0-rc1, Mathlib4, \texttt{lake build} passing with zero
\texttt{sorry} in any proof body) verifies the algebraic core of each
result below. Table~\ref{tab:lean} states exactly what is
machine-checked and what is not, so that the formal guarantee is
neither undersold nor oversold.

\begin{table}[!htbp]
\centering
\small
\begin{tabular}{@{}p{0.20\textwidth}p{0.24\textwidth}p{0.26\textwidth}p{0.22\textwidth}@{}}
\toprule
\textbf{Paper result} & \textbf{Lean declaration} & \textbf{Machine-checked} & \textbf{Not formalized} \\
\midrule
Theorem~\ref{thm:ident} & \texttt{symbolic\_iden\-ti\-fi\-a\-bil\-i\-ty} & Existence of a registry composition with $C(z) = W(z)$ and $\epsilon_1 = 0$ on a dense set & Registry semantics; the Richardson decidability caveat \\
\addlinespace
Theorem~\ref{thm:temp} & \texttt{temporal\_consistency} & Two-trajectory form: $\|W^t(z_0) - W^t(z_1)\| \leq L^t \mu$ for $\|z_0 - z_1\| \leq \mu$ & Per-step rounding injection (the geometric-sum recursion) \\
\addlinespace
Theorem~\ref{thm:ceiling} & \texttt{statistical\_tem\-po\-ral\_ceiling} & Arithmetic core: $b, \delta > 0$ give $T^* = \lceil \delta/b \rceil$ with $\delta \leq t b$ for $t \geq T^*$ & Klindt et al.\ converse (external premise); Assumption~1 \\
\addlinespace
Proposition~\ref{prop:floor} & (none) & --- & Population least-squares floor argument \\
\addlinespace
Theorem~\ref{thm:order} & (composite) & Follows from the formalized components above & The chain itself \\
\addlinespace
Corollary~\ref{cor:attention} & (none) & --- & Reduction of attention models to Theorem~\ref{thm:ceiling} \\
\addlinespace
Theorem~\ref{thm:approx} & \texttt{approximate\_sym\-bol\-ic\_identifiability} & Full Lipschitz bound $L_{\mathrm{known}} \cdot M$ & Decomposition semantics of $W_{\mathrm{known}} + W_{\mathrm{unknown}}$ \\
\addlinespace
Proposition~\ref{prop:blind} & \texttt{pixel\_space\_cau\-sal\_blindness} & Full: rank-nullity non-injectivity & --- \\
\addlinespace
Corollary~\ref{cor:pixel} & \texttt{pixel\_space\_no\_iden\-ti\-fi\-a\-bil\-i\-ty} & Full: no left inverse exists & --- \\
\bottomrule
\end{tabular}
\caption{Mapping between paper statements and the Lean~4 development.
The proof file and project configuration are included in the
supplementary repository.}
\label{tab:lean}
\end{table}

\section{Measurement and Validation Details}\label{app:measure}

\subsection*{Measurement methodology}

The non-Gaussianity, Lyapunov exponent, and horizon in Table~\ref{tab:steps}
are computed from deterministic integrations of each system with an explicit
RK45 scheme (rtol $=10^{-10}$, atol $=10^{-12}$). The non-Gaussianity is the
mean absolute excess kurtosis (Fisher; Gaussian $= 0$) over the state
coordinates, with any near-constant coordinate excluded so that a
zero-variance coordinate cannot produce an undefined value; it is reported as
evidence that the system is non-Gaussian, not as the representation bias
$\kappa(p)$, which is a property of a trained encoder. The largest Lyapunov
exponent $\lambda$ is estimated by Benettin renormalization: a reference and a
perturbed trajectory at separation $d_0 = 10^{-8}$ are advanced in intervals
of length $\tau$, and after each interval $\ln(d/d_0)$ is accumulated and the
perturbation is rescaled to $d_0$ along the current separation direction, so
the separation cannot saturate at the attractor diameter (the failure mode of
a single-shot estimate); $\lambda = (N\tau)^{-1}\sum_i \ln(d_i/d_0)$. A system
is classified chaotic when $\lambda > 0.1$. The integrable systems return a
small positive $\lambda$ that reflects linear phase drift between neighbouring
trajectories rather than exponential divergence; their true exponent is zero
and they fall below the threshold. The horizon then follows from
Theorem~\ref{thm:temp}: for a non-chaotic system the accumulated rounding
error reaches $\delta$ only after $T^*_{\mathrm{PGSA}} = \delta/\mu$ steps,
while for a chaotic system the error grows as $\mu\,e^{\lambda t}$ and reaches
$\delta$ after $T^*_{\mathrm{PGSA}} = \ln(\delta/\mu)/\lambda$ in time, that is
$\ln(\delta/\mu) \approx 31$ Lyapunov times for every chaotic system.

\begin{table*}[t]
\centering
\caption{%
  \textbf{Measured non-Gaussianity and PGSA temporal horizon across seven physical systems}
  ($\delta = 0.01$; $\mu \approx 2.2\times10^{-16}$).
  Non-Gaussianity is the mean absolute excess kurtosis of the state
  distribution (Gaussian $= 0$); it is nonzero for every system, so by
  Klindt et al.\ Theorem~5.2 the optimal statistical representation carries a
  nonzero bias $\kappa(p)$ and the statistical horizon
  $T^*_{\mathrm{stat}} = \delta/\kappa(p)$ (Theorem~\ref{thm:ceiling}) is
  finite. We do not report a single $\kappa(p)$ value, because $\kappa(p)$ is
  a property of a trained encoder rather than of the data. The PGSA horizon
  is the machine-precision bound $\delta/\mu$ for the five non-chaotic
  systems and the Lyapunov horizon $\ln(\delta/\mu)/\lambda \approx 31$
  Lyapunov times for the two chaotic systems. The small positive $\lambda$ of
  the integrable systems (Billiards, Orbital) is finite-time phase drift, not
  exponential divergence; their true exponent is zero and they fall below the
  $0.1$ chaos threshold. Values are measured from deterministic RK45
  integrations (rtol $=10^{-10}$, atol $=10^{-12}$).%
}
\label{tab:steps}
\small
\begin{tabular}{@{}l c c l@{}}
\toprule
System & Non-Gaussianity & $\lambda$ & $T^*_{\mathrm{PGSA}}$ \\
\midrule
SHO (conservative) & $1.50$ & $0.00$ & $\delta/\mu \approx 4.5{\times}10^{13}$ \\
Billiards (conservative) & $1.46$ & $0.03$ & $4.5{\times}10^{13}$ \\
Orbital (conservative) & $1.42$ & $0.06$ & $4.5{\times}10^{13}$ \\
Laminar Flow (dissipative) & $1.49$ & $0.00$ & $4.5{\times}10^{13}$ \\
Turbulent (Lorenz-96, chaotic) & $0.55$ & $1.56$ & $\sim\!8{\times}10^{3}$ \,($31\,\lambda^{-1}$) \\
Quantum SHO (conservative) & $1.50$ & $0.00$ & $4.5{\times}10^{13}$ \\
Lorenz Attractor (chaotic) & $0.59$ & $0.80$ & $\sim\!2{\times}10^{4}$ \,($31\,\lambda^{-1}$) \\
\bottomrule
\end{tabular}
\end{table*}

\subsection*{Solver validation of the engineering systems}

The four engineering systems in Figure~\ref{fig:four_paradigm} are linear, so
their closed-form fields admit a direct finite-element check. We solved each
on the QantmOrchstrtr solver stack from a Gmsh mesh (OCC kernel~4.15.1)
supplied to the solver as a cloud object handle, with no placeholder geometry,
and re-derived the reported scalars from the solver fields
(Table~\ref{tab:solver}). The cantilever, heat sink, and solenoid reproduce
the closed-form values to within one to six percent; the residual on the heat
sink reflects a solver material-library conductivity of
$237\,\mathrm{W\,m^{-1}K^{-1}}$ against $205$ in the closed-form reference. The
plate mesh constrains a single face, so its solve returns the first
cantilever-plate mode ($42.9\,\mathrm{Hz}$) rather than the fully clamped or
simply supported modes of the reference sweep; it validates the modal solver
under that boundary condition rather than the reference frequencies. The
horizons in Figure~\ref{fig:four_paradigm} are unchanged by this check: the
systems are linear and the solver reproduces the closed-form fields from which
the non-Gaussianity is computed.

\begin{table*}[t]
\centering
\caption{\textbf{Finite-element validation of the four engineering systems on
the QantmOrchstrtr solver stack.} Each geometry was solved from a Gmsh mesh
passed to the solver as a cloud object handle; scalars are re-derived from the
solver fields. The linear systems reproduce the closed-form reference within
finite-element tolerance.}
\label{tab:solver}
\small
\begin{tabular}{@{}l l l l r@{}}
\toprule
System & Solver / analysis & Solver result & Closed-form & Nodes \\
\midrule
Cantilever & CalculiX / static & $2376.5$ MPa, $15.16$ mm & $2400$ MPa, $16$ mm & $6{,}552$ \\
Heat sink & Elmer / thermal & $128.1\,^\circ\mathrm{C}$, $1.03\,\mathrm{K\,W^{-1}}$ & $132.4\,^\circ\mathrm{C}$, $1.07$ & $30{,}421$ \\
Solenoid & Elmer / magnetostatic & $14.92$ mT & $14.05$ mT & $7{,}590$ \\
Plate & CalculiX / modal & $42.9$ Hz (cantilever BC) & see text & $114{,}696$ \\
\bottomrule
\end{tabular}
\end{table*}

\subsection*{Linear identifiability measurement}

The linear identifiability in Figure~\ref{fig:linident} is measured as follows.
For each system we draw the state trajectory from its stationary measure
(subsampled to reduce autocorrelation), standardize each coordinate, and form the
optimal isotropic-Gaussian representation by the per-coordinate Gaussianizing
transport $r_i = \Phi^{-1}(\hat F_i(z_i))$, where $\hat F_i$ is the empirical CDF
of coordinate $i$ and $\Phi^{-1}$ is the standard normal quantile. This is the
population-optimal representation under an isotropic-Gaussian prior and requires
no training. We then fit the best linear readout $\hat z = Wr + c$ by ordinary
least squares and report the aggregate $R^2$ and the root-mean-square residual in
standardized state units (Table~\ref{tab:linident}). For a Gaussian latent the
transport is linear and $R^2 = 1$; for a non-Gaussian latent it is nonlinear, so
the linear readout cannot recover the state exactly and $R^2 < 1$. The
generalized-normal sweep applies the same procedure to samples drawn from a
generalized normal with shape $\beta$ from $1$ to $8$ ($\beta = 2$ is Gaussian).
Per-coordinate transport removes marginal non-Gaussianity; residual joint
structure is not removed, so the reported $R^2$ is an upper bound on linear
identifiability for these systems.

\begin{table*}[t]
\centering
\caption{\textbf{Measured linear identifiability of the seven systems} from the
closed-form optimal isotropic-Gaussian representation, an upper bound on any
trained Gaussian-prior encoder. Non-Gaussianity is the mean absolute excess
kurtosis (Table~\ref{tab:steps}); the residual is the root-mean-square of the
best linear readout in standardized state units.}
\label{tab:linident}
\small
\begin{tabular}{@{}l c c c@{}}
\toprule
System & Non-Gauss. & Lin.\ id.\ $R^2$ & Resid.\ (rms) \\
\midrule
Turbulent (Lorenz-96, chaotic) & $0.546$ & $0.994$ & $0.077$ \\
Lorenz Attractor (chaotic) & $0.586$ & $0.987$ & $0.113$ \\
Orbital & $1.418$ & $0.907$ & $0.306$ \\
Laminar Flow & $1.494$ & $0.902$ & $0.313$ \\
Quantum SHO & $1.500$ & $0.900$ & $0.317$ \\
SHO & $1.500$ & $0.900$ & $0.317$ \\
Billiards & $1.463$ & $0.898$ & $0.320$ \\
\bottomrule
\end{tabular}
\end{table*}


\section{Additional Discussion}\label{app:discussion}

Supplementary Information contains the full Lean~4 formalization of the
algebraic cores of Theorems~\ref{thm:ident}, \ref{thm:temp},
\ref{thm:ceiling}, and~\ref{thm:approx} (with the Klindt et al.\ converse
recorded as an external premise), extended derivations of the bounds in
Theorems~\ref{thm:temp} and~\ref{thm:ceiling}, and the per-system simulation
trajectories and measurement script from which Table~\ref{tab:steps} is
computed.


\subsection*{Why empirical models achieve short-horizon consistency}

Trained statistical world models visibly work: video generators stay
coherent for seconds, latent-dynamics agents solve control suites, and
JEPA embeddings transfer across perception tasks. None of this
contradicts the results of this paper. The theory predicts the success,
quantifies its extent, and locates its boundary. This section makes
that reconciliation explicit, because the apparent tension between
``the ceiling is real'' and ``the demos are impressive'' dissolves into
four separate mechanisms with four separate formal homes.

\textbf{First, a positive horizon is the theorem's own prediction.}
Theorem~\ref{thm:ceiling} does not say statistical models fail
immediately; it says $T^*_{\mathrm{stat}} = \delta/\kappa(p)$ is
finite. For any positive tolerance the model enjoys a guaranteed
window of $\delta/\kappa$ consistent steps before the accumulated bias
breaches it. Observing a statistical model succeed at
$t \ll \delta/\kappa$ confirms the accounting rather than refuting it.
The claim at stake is never short-horizon competence; it is the
extrapolation from short-horizon competence to long-horizon
simulation.

\textbf{Second, deployed systems are closed-loop, and the floor is
kind to closed loops.} Proposition~\ref{prop:floor} identifies
re-encoding the world at every step as the most favorable rollout
policy: its error is pinned near the floor
$\sigma_{\mathrm{res}} = \sqrt{1 - R^2}$ rather than growing with $t$.
Perception-in-the-loop deployments live in exactly this regime. A
robot with a camera, a tracker fed observations, a model-predictive
controller that replans against fresh sensor data: each resets the
rollout clock at every observation, so its effective horizon per
segment is one or a few steps and its steady-state error is the floor
plus one-step drift, indefinitely. Open-loop rollout, prediction with
no new observations, is the regime of Lemma~1 and
Theorem~\ref{thm:ceiling}, and it is precisely the regime that
engineering simulation requires. This single distinction explains why
the same encoder family powers an impressive interactive
agent~\cite{ha2018,hafner2025} and an unusable long-horizon simulator:
the agent is paying the floor, the simulator is paying the growth.

\textbf{Third, tolerance is doing quiet work.} The horizon scales
linearly in $\delta$, and tasks differ in $\delta$ by orders of
magnitude. Perceptual plausibility is a loose tolerance: a human
viewer accepts large state error as long as frames stay locally
coherent, so generative video operates at a $\delta$ that buys many
steps from the same $\kappa$. Engineering simulation runs at
tolerances set by material limits and certification requirements,
orders of magnitude tighter, so the same representation bias that
funds seconds of plausible video funds essentially no usable steps of
tight-tolerance simulation. Comparing the two regimes at face value
compares different points on the same $\delta/\kappa$ line.

\textbf{Fourth, training dynamics shrink the constant without touching
the structure.} Two mechanisms reduce the effective per-step bias
below the population ceiling. Gradient descent on a finite sample
implicitly regularizes toward smoother, locally Gaussian-like
representations, minimizing the realized bias over the training
distribution rather than the true continuous one. And next-step
supervision, the explicit objective of JEPA-style and Dreamer-style
training, teaches the latent transition $\hat{W}$ to partially invert
the local representation bias over one-step horizons. Both mechanisms
lower the effective $\kappa$; neither can set it to zero, because the
bias is a property of the objective's optimum
(Theorem~\ref{thm:ceiling}, capacity independence) and the best
achievable point remains floored by
Proposition~\ref{prop:floor}. Compensation buys a longer window on the
same line; it does not change the line.

The measured systems add one empirical footnote to this account. The
chaotic systems in the suite are the most Gaussian and therefore carry
the smallest bias (Table~\ref{tab:linident}: $R^2 = 0.987$ and
$0.994$), so statistical models look best on exactly the systems whose
physics grants the shortest horizons regardless of architecture
(Table~\ref{tab:steps}). The regime where statistical alignment is
most flattering is the regime where no model, symbolic or statistical,
can simulate far. Figure~\ref{fig:recovery_pgsa} shows this
anti-correlation directly.

The account above is falsifiable. An open-loop statistical rollout
that holds a tight tolerance well past $\delta/\kappa_{\mathrm{eff}}$,
with $\kappa_{\mathrm{eff}}$ measured from its own one-step bias,
would violate Theorem~\ref{thm:ceiling} under Assumption~1. We are
not aware of a reported instance.

\subsection*{The task asymmetry objection}

A natural objection is that the comparison is not
``apples-to-apples'': JEPA must infer the latent state from raw
pixel observations, while the PGSA assumes the state is already
parsed into typed physical variables.

This objection is factually correct regarding the task asymmetry,
but it misunderstands the architectural implication.
The asymmetry is not a flaw in the comparison; it is the central
point.

The goal of a World Model is to reliably predict the consequences of
actions over long horizons.
If an architecture cannot achieve this goal, the reason \emph{why}
it fails, whether due to perceptual bottleneck, representational
collapse, or Gaussian prior mismatch, does not change the fact that
it fails.

The mathematical consequence of entangling perception and simulation
within the same monolithic statistical mechanism is what
Theorem~\ref{thm:ceiling} and Proposition~\ref{prop:blind}
quantify.
The non-Gaussian bias $\kappa(p)$ required to align the latent
space~\cite{klindt2026} becomes a compounding error in the forward
simulation, growing at least linearly with time $t$.

The PGSA separates perception from simulation.
By separating these concerns, the perceptual error $e_0$ becomes a
one-time bounded error at $t = 0$, and the forward simulation merely
propagates this error according to the physical Lipschitz constant
$L$, yielding a bounded error $L^t\|e_0\|$ rather than a compounding
bias $t \cdot \kappa(p)$.

\begin{figure}[!htbp]
\centering
\includegraphics[width=\columnwidth]{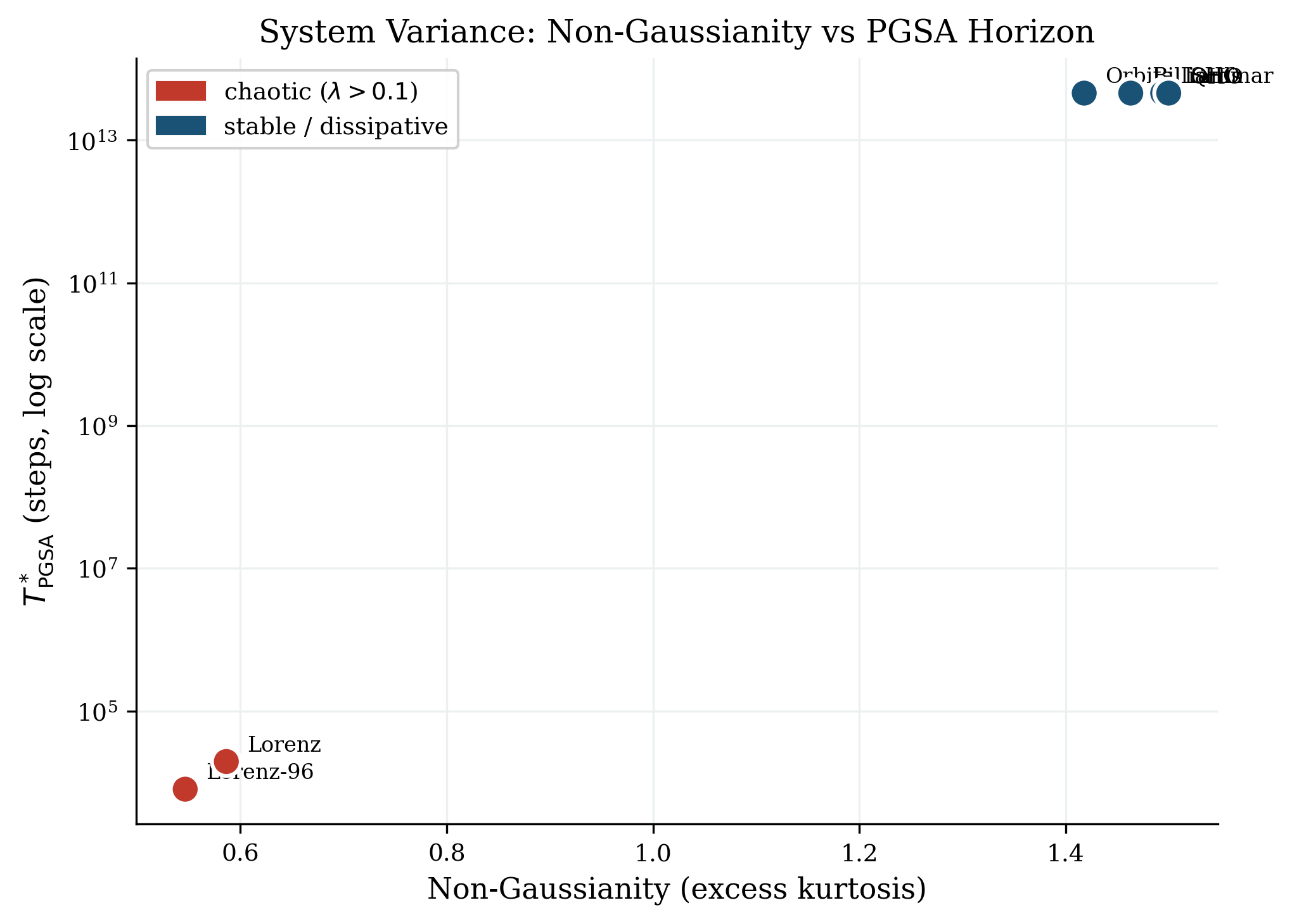}
\caption{\textbf{Extended Data Figure: system variance of the statistical
horizon.} A single statistical prior has no fixed temporal validity; the
latent-space horizon varies by orders of magnitude across systems
(Figure~\ref{fig:tstar_steps}), whereas the symbolic horizon is set by machine
precision alone.}
\label{fig:system_variance}
\end{figure}

\bibliographystyle{naturemag}

\end{document}